\documentclass[lettersize,journal]{IEEEtran}
\usepackage{amsmath,amsfonts,amsthm}
\usepackage{algorithmic}
\usepackage{algorithm}
\usepackage{array}
\usepackage[caption=false,font=normalsize,labelfont=sf,textfont=sf]{subfig}
\usepackage{textcomp}
\usepackage{stfloats}
\usepackage{url}
\usepackage{verbatim}
\usepackage{graphicx}
\usepackage{cite}
\usepackage{booktabs}
\usepackage{multirow}
\usepackage{makecell}
\usepackage{color}
\usepackage{CJKutf8}
\usepackage{pifont}
\usepackage{colortbl}
\newcolumntype{H}{>{\setbox0=\hbox\bgroup}c<{\egroup}@{}}
\usepackage{bm}
\usepackage{bbm}
\usepackage{pythonhighlight}
\graphicspath{{./Figure/}}
\usepackage{mathrsfs}
\usepackage{mathrsfs}
\usepackage{MnSymbol}
\usepackage{dashrule}
\usepackage{tikz}
\usepackage{mathtools}
\usepackage{amsthm}
\newtheorem{definition}{Definition} 

\newtheorem{theorem}{Theorem}

\usepackage{adjustbox}

\definecolor{orange}{RGB}{247, 224, 213}

\begin{document}

\title{All-in-One Image Restoration via Causal-Deconfounding Wavelet-Disentangled Prompt Network}

\author{Bingnan~Wang, Bin~Qin, Jiangmeng~Li, Fanjiang~Xu, Fuchun~Sun,~\IEEEmembership{Fellow,~IEEE}, and Hui~Xiong,~\IEEEmembership{Fellow,~IEEE}

\thanks{This work is supported by National Natural Science Foundation of China No. 62406313, Postdoctoral Fellowship Program of China Postdoctoral Science Foundation, Grant No. YJB20250283. (\textit{Bingnan Wang, Bin Qin, and Jiangmeng Li have contributed equally to this work. Corresponding author: Jiangmeng Li.})}
\thanks{Bingnan Wang, Bin Qin, Jiangmeng Li, and Fanjiang Xu are with University of Chinese Academy of Sciences, Beijing 100190, China, and also with Institute of Software, Chinese Academy of Science, Beijing 100190, China (e-mail: wangbingnan21@mails.ucas.ac.cn, qinbin21@mails.ucas.ac.cn, jiangmeng2019@iscas.ac.cn, fanjiang@iscas.ac.cn).}% <-this % stops a space
\thanks{Fuchun Sun is with the Department of Computer Science
and Technology, Tsinghua University, Beijing National Research Center
for Information Science and Technology (BNRist), Beijing 100084, China
(e-mail: fcsun@tsinghua.edu.cn).}
\thanks{Hui Xiong is with the Artificial Intelligence Thrust, The Hong Kong
University of Science and Technology (Guangzhou), Guangzhou 511458,
China (e-mail: xionghui@ust.hk).}}

% The paper headers
\markboth{Journal of \LaTeX\ Class Files,~Vol.~14, No.~8, August~2021}%
{Shell \MakeLowercase{\textit{et al.}}: A Sample Article Using IEEEtran.cls for IEEE Journals}

% \IEEEpubid{0000--0000/00\$00.00~\copyright~2021 IEEE}
% Remember, if you use this you must call \IEEEpubidadjcol in the second
% column for its text to clear the IEEEpubid mark.

\maketitle

\begin{abstract}
Image restoration represents a promising approach for addressing the inherent defects of image content distortion. Standard image restoration approaches suffer from \textit{high storage cost} and the requirement towards the \textit{known} degradation pattern, including type and degree, which can barely be satisfied in dynamic practical scenarios. In contrast, all-in-one image restoration (AiOIR) eliminates multiple degradations within a unified model to circumvent the aforementioned issues. However, according to our causal analysis, we disclose that two significant defects still exacerbate the effectiveness and generalization of AiOIR models: 1) the \textit{spurious correlation} between \textit{non-degradation} semantic features and degradation patterns; 2) the \textit{biased estimation} of degradation patterns. To obtain the true causation between degraded images and restored images, we propose \textit{\textbf{C}ausal-deconfounding \textbf{W}avelet-disentangled \textbf{P}rompt \textbf{Net}work} (CWP-Net) to perform effective AiOIR. CWP-Net introduces two modules for decoupling, i.e., wavelet attention module of encoder and wavelet attention module of decoder. These modules explicitly disentangle the degradation and semantic features to tackle the issue of spurious correlation. To address the issue stemming from the biased estimation of degradation patterns, CWP-Net leverages a wavelet prompt block to generate the alternative variable for causal deconfounding. Extensive experiments on two all-in-one settings prove the effectiveness and superior performance of our proposed CWP-Net over the state-of-the-art AiOIR methods.
\end{abstract}

\begin{IEEEkeywords}
All-in-one image restoration, structural causal model, prompt learning, wavelet transform.
\end{IEEEkeywords}

 \section{Introduction} \label{intro}
The inherent defects of the imaging system (e.g., noise) and poor environmental conditions (e.g., rain and low light) can lead to distortion of the image content. A dominant choice for addressing this issue is image restoration, which aims to restore clean images from degraded counterparts through \textit{post-processing}. Image restoration demonstrates widely promising performance in the fields of video surveillance \cite{li2021online}, autonomous driving \cite{li2021deep}, and mobile phone imaging \cite{lai2022face}.

The canonical image restoration approaches contain two learning paradigms: 1) the task-specific image restoration methods \cite{li2019heavy,zheng2023curricular,wei2018deep,zhang2021exposure}, utilizing the degradation mechanisms associated with specific tasks (such as denoising, deblurring, etc.) as priors to guide the design of network architectures. 2) the general image restoration methods \cite{li2023efficient,fei2023generative,guo2024mambair}, leveraging a shared network architecture while training separate model weights for different tasks. Inevitably, the canonical image restoration approaches encounter two major challenges in dynamic practical scenarios: \textit{high storage cost} and the requirement for the \textit{known} degradation pattern (type and degree) to retrieve the corresponding model weights.

In contrast, all-in-one image restoration (AiOIR) aims to eliminate multiple degradations within a single model \cite{zhu2023learning,chen2021pre,li2020all} to alleviate the \textit{high storage cost} issue. A prevalent strategy in AiOIR models is to predict degradation patterns, which subsequently guide the modulation of the restoration network's parameter space for diverse tasks. Recent advancements have explored both explicit \cite{park2023all,luo2023controlling,li2022all,li2023prompt} and implicit \cite{potlapalli2024promptir} learning of degradation patterns by incorporating auxiliary degradation classifiers (e.g., CNNs or large-scale vision-language models). In this regard, we model the inherent causal mechanism behind AiOIR by leveraging the structural causal model (SCM) \cite{pearl2009causality}. From the causal perspective, the objective of AiOIR models is to learn the deconfounded \textit{causation} between the degraded image and the restored image. However, we disclose that current methods still encounter two significant defects in deconfounding, thereby halting the rise of the effectiveness and generalization: 1) the \textit{spurious correlation} between \textit{non-degradation} semantic features and degradation patterns; 2) the \textit{biased estimation} of degradation patterns. Detailed causal analysis can be seen in Sec. \ref{causal}.

\begin{figure*}[htbp]
\centering
\vspace{-0.5cm}
\includegraphics[width=\linewidth]{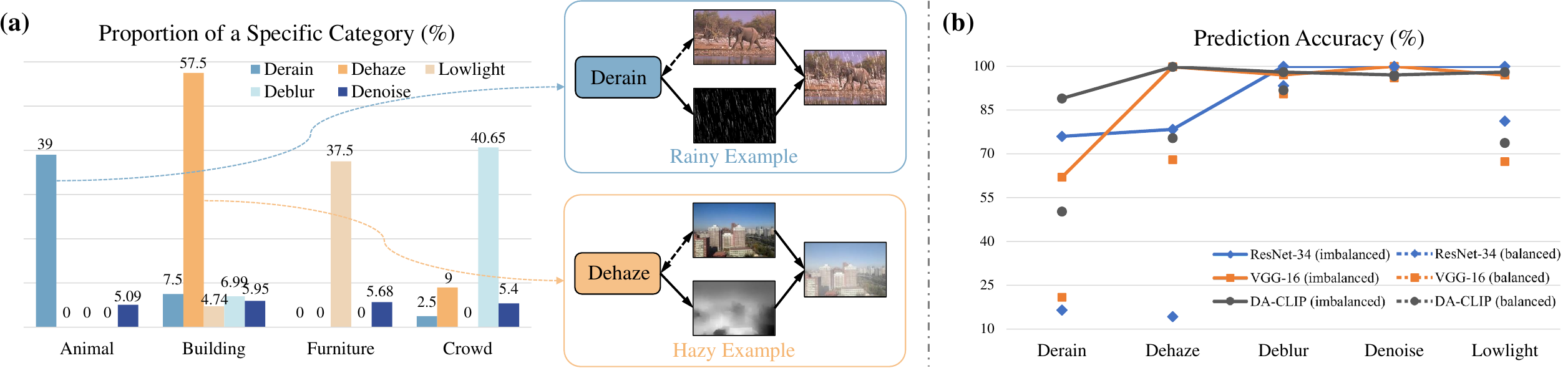}
\vspace{-0.5cm}
\caption{(a) The illustration of spurious correlation. We present the statistical analysis of four selected categories in the training data of five tasks. Detailed dataset information is provided in Sec. \ref{sec:dataset}. (b) The prediction accuracy of five degradation patterns on imbalanced test data and balanced test data. ResNet-34 \cite{he2016deep}, VGG-16 \cite{simonyan2014very}, and DA-CLIP \cite{luo2023controlling} are utilized as degradation classifiers. The training and testing data used in conventional AiOIR models are considered imbalanced. Balanced means that
the task data is not limited to the scenarios present in the
training set, but rather it is rich and comprehensive. All classifiers are trained on imbalanced training data. The balanced test data is constructed with details outlined in Sec. \ref{sec:exp_unbiased}.}
\vspace{-0.3cm}
\label{fig_sc}
\end{figure*}

As demonstrated in Fig. \ref{fig_sc}(a), we provide intuitive examples and statistical evidence to prove the general existence of spurious correlation. For example, animal-related scenes constitute 39\% of the samples in the deraining dataset; however, such scenes are absent from deblurring and dehazing datasets. Consequently, the degradation classifiers may learn the unexpected spurious correlation between degradation patterns and non-degradation semantic features, resulting in detrimental impacts on AiOIR models. Furthermore, the experiments in Fig. \ref{fig_sc} (b) demonstrate the issue of biased estimation of degradation patterns. Even on imbalanced data, these models fail to achieve perfect predictions for tasks such as deraining and dehazing. The insurmountable gap between performance on balanced and imbalanced test sets substantiates that the presence of spurious correlations further exacerbates the estimation bias. Therefore, enabling AiOIR models to effectively address these limitations and obtain true causal effects poses challenges.

To this end, we propose \textit{\textbf{C}ausal-deconfounding \textbf{W}avelet-disentangled \textbf{P}rompt \textbf{Net}work} (CWP-Net) to perform effective AiOIR. To tackle the issue of spurious correlation, two modules for decoupling are inserted in the encoder and decoder: wavelet attention module of encoder (WAE) and wavelet attention module of decoder (WAD). The attention maps of the approximate wavelet coefficients are employed as the degradation representations. These maps explicitly disentangle the degradation features from the semantic features by assigning significant weights to the degraded regions and small weights to the semantic regions. The mitigation of spurious correlations thereby facilitates the precise inference of causal effects. To alleviate the adverse effects on AiOIR models by biased estimation of degradation patterns, we explore the alternative variable to perform sufficient causal deconfounding. Specifically, we propose a wavelet prompt block (WPB), comprising two components: the degradation-based weight estimator (DWE) and the prompt-guided weighted spatial feature transform (PWSFT), which work together to derive the alternative variable for causal deconfounding and to implement backdoor adjustment. Empirical experiments prove that the proposed CWP-Net can significantly outperform the state-of-the-art AiOIR methods. To summarize, our \textit{contributions} are four-fold:

\begin{itemize}
\item We elaborate on the intrinsic reasons exacerbating the performance of AiOIR methods: 1) the spurious correlation between semantic features and degradation patterns; 2) the biased estimation of degradation patterns.
\item We present the SCM-based analysis to theoretically model the causal mechanism of AiOIR, whose correctness is validated through causal discovery theory, and further provide a causality-guided methodology that directs the module design to address the inherent defects of existing methods.
\item We propose CWP-Net, incorporating two wavelet attention modules to mitigate spurious correlation, and the wavelet prompt block to explore the deconfounded causation between the input degraded image and restored image, thereby improving the model effectiveness.
\item Extensive experiments demonstrate that the proposed CWP-Net can improve the performance of AiOIR models on five benchmarks with two all-in-one settings and exhibits superior generalization over existing methods.
\end{itemize}

\section{Related Work}
\subsection{Canonical Image Restoration}

As a long-standing problem, image restoration aims to recover high-quality images from degraded ones. Due to the strong non-linearity and generalization of deep networks, deep learning-based methods have become the mainstream paradigm in the field \cite{wang2024survey}. For different image restoration tasks (such as deraining \cite{refr3.4}, motion deblurring \cite{refr3.1,refr3.2}, low-light enhancement \cite{refr3.6}, etc.), various carefully-designed networks have been proposed to remove specific degradation \cite{chen2023learning,song2023vision}. For example, Zhang et al. \cite{refr3.6} link semantic segmentation and depth estimation to the snow removal task, integrating semantic and geometric priors to improve desnowing performance.

Recently, the research focus has shifted towards designing generic restoration models \cite{zamir2022restormer,li2023efficient,cui2023focal} that can achieve superior performance in various tasks. For instance, Restormer \cite{zamir2022restormer} proposes an efficient transformer architecture with multi-axis MLPs to handle high-resolution image restoration, achieving state-of-the-art performance while maintaining computational efficiency. Similarly, MB-TaylorFormer V2 \cite{refr3.5} introduces an improved multi-branch linear transformer expanded by the Taylor formula, which enhances model expressiveness and generalization for diverse image restoration tasks. However, its performance in real-world scenarios is limited in two aspects: first, we need to train and store distinct models for each task, which is computationally intensive and storage-consuming. Second, the need for prior information of the degradation type and degree is inaccessible.

\subsection{All-in-One Image Restoration}
AiOIR methods conduct multiple tasks (such as deraining, denoising, etc.) using a single model. Early models assigned a separate feature extractor for each task as encoders or decoders \cite{li2020all, chen2021pre}. Additionally, some methods explicitly separated task-general and task-specific features to facilitate multi-degradation learning \cite{zhu2023learning}. These methods violate the strict all-in-one setting, where the degradation pattern is blind to the model during inference. 

In the canonical image restoration, enhancing model robustness in real-world scenarios has motivated two major lines of research. The first incorporates high-level semantic priors, leveraging the scene understanding capability of large-scale vision foundation models to provide contextual guidance for pixel-level restoration \cite{xiao2023dive}. The second follows an explicit degradation estimation paradigm \cite{xiao2025incorporating,refr3.6}, in which a lightweight auxiliary network is developed to provide degradation guidance information adaptively.

Inspired by these advances, a natural extension is to transfer such ideas to the AiOIR setting, where degradation types and intensities are more diverse and intertwined. Accordingly, unified AiOIR approaches \cite{refr3.7} have been proposed following the design framework of degradation representation and degradation-guided restoration. During the degradation representation stage, techniques such as contrastive learning \cite{li2022all}, prompt learning \cite{potlapalli2024promptir}, and large-scale vision-language models (e.g., CLIP) \cite{luo2023controlling} are introduced to learn degradation patterns explicitly or implicitly. Subsequently, the degraded representation modulates the features of the restoration branch to facilitate task-relevant feature learning. However, the spurious correlation between degradation patterns and semantic features learned by the model and the biased estimation of degradation patterns exacerbate the generalization and effectiveness of these all-in-one models.

\subsection{Wavelet-based Image Processing}
As an effective frequency analysis tool, wavelet transform can extract low- and high-frequency information explicitly. It has been introduced into the network design to enhance local details in various image restoration tasks \cite{zou2021sdwnet, zhang2023underwater}. MRLPFNet \cite{dong2023multi} integrated wavelet transform in skip-connection to fuse multi-scale encoder features for image deblurring. \cite{zhang2024pan} decomposed features into four subbands and refined each subband for pan-sharpening. Moreover, significant efforts have been made in various domains, such as super-resolution \cite{huang2017wavelet}, low-light enhancement \cite{xu2022illumination}, and image inpainting \cite{yu2021wavefill}, to design wavelet-based methods for improving model performance. In contrast, we first introduce the wavelet transform into the AiOIR task, leveraging it as the primary tool to remedy the defect of spurious correlations and to achieve causal deconfounding.

\section{Theoretical Analysis with Causal Insights}
\label{causal}
In this section, we first provide an SCM analysis of the AiOIR problem. Subsequently, we explain the challenges of existing AiOIR methods for sufficient causal deconfounding in the SCM framework.

\subsection{Causal Modeling for All-in-One Image Restoration}
\label{sec:causal model}

We have modeled a unified SCM framework for the causal mechanism behind \textit{ideal AiOIR}, which is depicted in Fig. \ref{fig_scm}(a). We will explain each path individually: 1) $X \rightarrow R \rightarrow Y$ illustrates the true causation expected to be learned by the restoration network $f$. 2) $C \rightarrow X \leftarrow D \leftarrow T$ represents the generation process of the degraded images. The training set of a specific $T$ comprises degraded images with various degradation features. Notably, in the \textit{ideal AiOIR} model, the semantic feature $C$ can be regarded as a random background variable that does not form a confounding path for the causation of $X$ on $Y$. 3) $T \rightarrow Y$ represents the degradation-specific knowledge learned from each degradation pattern to the corresponding restored image. The confounding path $X \leftarrow D \leftarrow T \rightarrow Y$ causes the AiOIR network $f$ to learn the correlation $\mathbb{P}(Y|X)$ instead of the causation of $X$ on $Y$ (i.e., $\mathbb{P}(Y|do(X))$). Therefore, from the causal perspective, the goal of AiOIR models $f$ is to achieve causal deconfounding, thereby facilitating the learning for the causation of $X$ on $Y$.

\begin{figure}[htbp]
\centering
\includegraphics[width=0.8\linewidth]{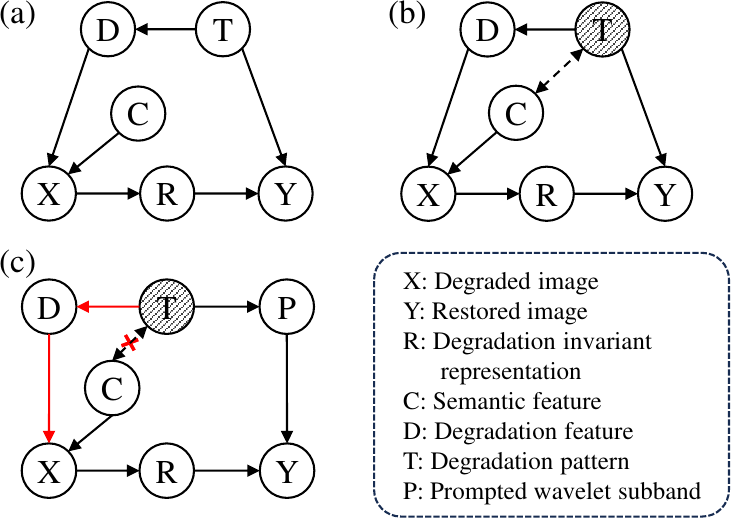}
\vspace{-0.3cm}
\caption{The SCM graph of AiOIR. (a) A unified causal framework for existing \textit{ideal AiOIR} methods.
% : there only exists one confounding path $X \leftarrow D \leftarrow T \rightarrow Y$ satisfying $C \upmodels T$. 
(b) Two challenges of \textit{practical AiOIR} via causal lens. The dashed line $\dashleftarrow \dashrightarrow $ represents the spurious correlation, and the grey node denotes the unobservability of the variable. (c) Causal deconfounding through alternative variable $P$. The \textcolor{red}{red} path and the symbol ``$\textcolor{red}{\times}$'' indicate the use of the wavelet attention module to eliminate the spurious correlation between $C$ and $T$, allowing the restoration network to focus on degraded regions rather than semantic features.}
\vspace{-0.3cm}
\label{fig_scm}
\end{figure}

\subsection{Challenges of Practical AiOIR via Causal Lens}
\label{sec:cau_challenge}
Most efforts in AiOIR models can be summarized as mitigating the negative impact of the confounder $T$ in Fig. \ref{fig_scm}(a). For example, previous methods first estimate $T$ of the degraded image \cite{park2023all,luo2023controlling} and then adapt the restoration network to handle different types and degrees of degradation. From a causal perspective, this can be understood as conditioning on the variable $T$, such that $X$ and $Y$ are then $d$-separated \cite{pearl2009causality,pearl2016causal} with only the confounding path considered. In other words, the confounding path $X \leftarrow D \leftarrow T \rightarrow Y$ is blocked by variable $T$, thereby the causation of $X$ on $Y$ can be learned by models. However, existing methods still encounter two challenges in practical scenarios as stated in Sec. \ref{intro}. Here, we further explain the detrimental impact of these two issues on AiOIR models for causal deconfounding.

\textbf{Spurious correlation challenge.} In causality, the spurious correlation is represented by the dashed line $C \dashleftarrow \dashrightarrow T$ as shown in Fig. \ref{fig_scm}(b), which can be generally explained by a latent common cause \cite{verma1993graphical}: $C \leftarrow E \rightarrow T$, where $E$ denotes some unobservable hidden variables. For example, a particular value $E=e$ might indicate that furniture often appears in low-light environments or that animals often appear with rain streaks. The presence of spurious correlations leads to estimated values of the confounder (e.g., $T$) being correlated with both $D$ and $C$. In the balanced test set, where the spurious correlation of $C$ and $T$ is absent, the estimation of the confounder becomes biased, which is substantiated by Fig. \ref{fig_sc}(b).

\textbf{Biased estimation challenge.} The empirical data presented in Fig. \ref{fig_sc}(b) under the imbalanced test set demonstrates the inherent inability of degradation classifiers to achieve perfect estimation. The further performance decline observed in the balanced test set confirms that the existence of spurious correlation amplifies the inaccuracy in estimating $T$.

Both challenges result in the estimation bias of confounders, and then the confounding path is not sufficiently blocked, preventing the AiOIR model from effectively learning the deconfounded causal effects.

\textbf{Causal discovery for correctness validation of SCM graph.} To establish the deconfounding approach on a complete theoretical foundation, we validate the SCM graph that models the two challenges of practical AiOIR based on causal discovery theory. We first employ the Fast Causal Inference (FCI) algorithm~\cite{spirtes2000causation,spirtes2013causal} to learn the Partial Ancestral Graph (PAG) from real-world datasets, where the algorithm outputs $C \leftrightarrow T$. Furthermore, we adopt the Invariant Causal Prediction (ICP) approach~\cite{peters2016causal} to identify the latent confounder as environment $E$. The detailed experimental procedures and results are provided in the Appendix \ref{sec:expri_cd}.

\section{Methodology}
\label{method}
\begin{figure*}[htbp]
\centering
\vspace{-0.5cm}
\includegraphics[width=0.75\linewidth]{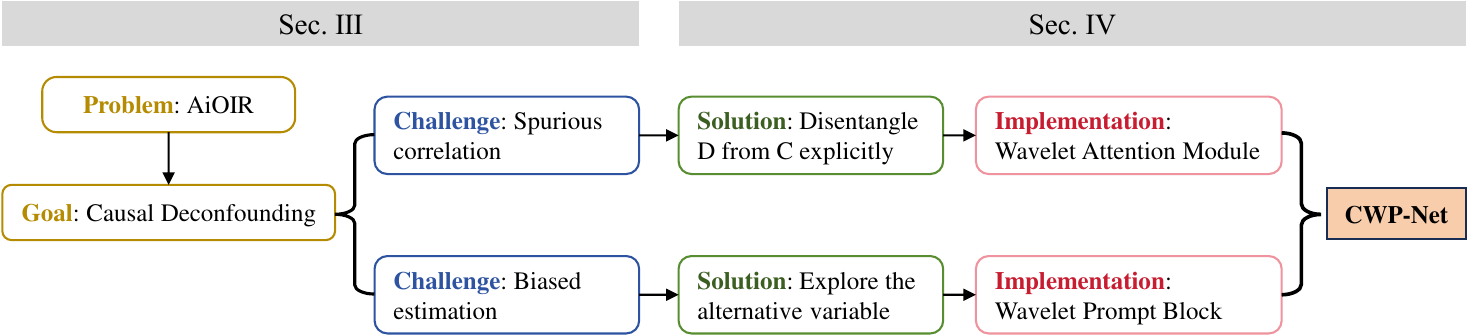}
\vspace{-0.2cm}
\caption{The pipeline of our causal analysis and the corresponding causality-guided method, i.e., CWP-Net.}
\label{fig_connect}
\end{figure*}
In this section, we will begin by introducing our causal deconfounding methods to address the above challenges, as shown in Fig. \ref{fig_connect}. Following this, we will provide a comprehensive explanation of our proposed wavelet attention module, designed to mitigate spurious correlations, and the wavelet prompt block, aimed at generating the alternative variable for biased estimated $T$. Subsequently, we will introduce the overall architecture of CWP-Net. Finally, we will illustrate the loss function for training.

\subsection{Causal Deconfounding via Backdoor Adjustment}
\label{sec:deconfounding}
\textbf{Solution for biased estimation challenge.} Due to the biased estimation challenge, we hypothesize that $T$ is unobservable. Instead, we turn to search for the alternative variable for $T$ to perform effective causal deconfounding, thereby enabling the capture of the true causal effect $\mathbb{P}(Y|do(X))$.

\begin{figure}
\centering
\vspace{-0.2cm}
\includegraphics[width=\linewidth]{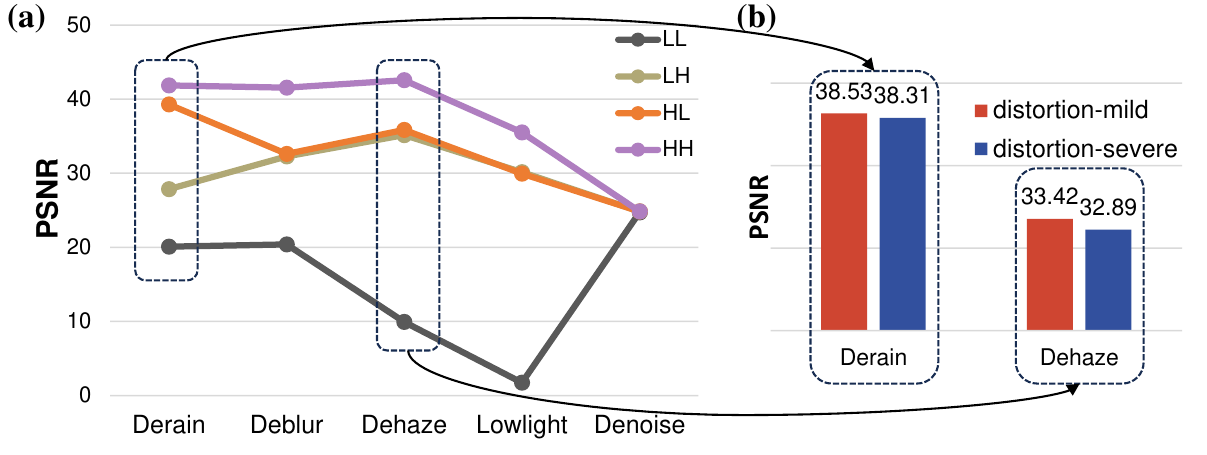}
\vspace{-0.5cm}
\caption{(a) The distortion severity of five degradation patterns on each wavelet subband. The ``PSNR'' is calculated with a higher value indicating milder distortion. Here, we roughly consider items with a PSNR above 30 dB (green dashed line) as distortion-mild subbands, and those below 30 dB as distortion-severe subbands. (b) Experiment results of introducing prompt components to distortion-mild (e.g., ``HH'' and ``HL'' for derain task) and distortion-severe (e.g., ``LH'' and ``LL'' for derain task) subbands of features in AiOIR model.}
\vspace{-0.5cm}
\label{fig:alternative_variable}
\end{figure}
$\mathbf{T \rightarrow P \rightarrow Y.}$ As seen in Fig. \ref{fig_scm}(c), we first introduce a variable $P$ and replace $T \rightarrow Y$ with $ T \rightarrow P \rightarrow Y$ according to the observation that the degradation pattern affects prompted wavelet subbands (i.e., $T \rightarrow P$). This observation can be concluded from the experimental evidence in Fig. \ref{fig:alternative_variable}. Specifically, we first analyze the distortion severity of four wavelet subbands of degraded images in the training set of different $T$ in Fig. \ref{fig:alternative_variable}(a). Besides, the prompt component is introduced to modulate different wavelet subbands of features through two procedures: prompt generation and prompt-feature interaction, whose implementation details can be found in Sect. \ref{sec:wpb}. The experimental results in Fig. \ref{fig:alternative_variable}(b) indicate that prompting on distortion-mild subbands is more effective. However, the distortion-mild subbands for different $T$ have discrepancies. Therefore, the degradation pattern affects the prompted wavelet subbands (i.e., $T \rightarrow P$). 

According to the backdoor criterion \cite{pearl2011graphical}, the variables on the backdoor path $X\leftarrow D \leftarrow T \rightarrow P \rightarrow Y$ from $X$ to $Y$, can be used for deconfounding. 
Due to the unobservability of $T$, the variables available for adjustment are $D$ and $P$. However, adjusting $D$ encounters obstacles. Firstly, the \textit{value space} of $D=d_i$ is too broad, whereas $T$ only encompasses a limited number of degradation patterns. Therefore, the value space of $P$ is easier to traverse compared to $D$, as the values of $P$ are determined by the specific degradation task. Secondly, the \textit{available value set} of variable $D$ in the training set is too small due to the challenge of collecting degraded images with the same content but different degradation features. To instantiate $\mathbb{P}(Y|X,D=d_i)$, we need to perform counterfactual data augmentation to artificially replace degradation features and obtain usable samples, which is complicated to implement. 

Therefore, the variable $P$ is used as the adjustment variable to acquire the causation (We prove the identifiability of $P(Y|do(X))$ for Fig. \ref{fig_scm}(c) in Appendix \ref{sec_identi}):
\begin{align}
\mathbb{P}(Y|do(X))&=\sum_{p_i}\mathbb{P}(Y|X,P=p_i)\mathbb{P}(P=p_i).\label{eq:adj2}
\end{align} 

Let the probability space of prompted wavelet subbands $P$ be denoted as $\mathcal{M}=(\mathcal{P}, \mathcal{E}, \mathbb{P}):$
\begin{equation}
\left\{
\begin{array}{l}
\mathcal{P}=\{ P_{LL}, P_{LH}, P_{HL}, P_{HH} \}, \\
{p}_i \in \mathcal{E}= \left\{ \left ( \omega_{LL}\cdot P_{LL}, \omega_{LH}\cdot P_{LH}, \omega_{HL}\cdot P_{HL},\omega_{HH}\cdot P_{HH}\right ) \right\},
\end{array}
\right.
\end{equation}
where the sample space $\mathcal{P} $ represents wavelet subbands modulated by the prompt components. The event space $\mathcal{E} $ is a weighted combination of all subbands $P_{*} $. The value range of $\omega_{*}, *\in \{LL, LH,HL,HH\} $ is $[0,1] $. Naturally, when $\omega_{*} $ takes values of 0 or 1, we obtain specific combinations, such as $p_i=\left( P_{HL},P_{HH}\right) $ for deraining and $p_i=\left( P_{LH},P_{HL},P_{HH}\right) $ for dehazing. We define the value of random variable $P$ as $p_i $, where $P=p_i $ represents the weighted combination of four prompted wavelet subbands, i.e., $p_i \in \mathcal{E} $.

To instantiate the adjustment formula (i.e., Eq. \eqref{eq:adj2}), we design a wavelet prompt block (WPB) to learn the probability distribution over all possible $p_i $ during each training iteration. The summation term accumulates the contribution of the distribution of the combination $p_i $ of prompted wavelet subbands $P_{*} $ learned by the WPB in each epoch to the restoration network, that is, the degree of contribution to the causal effect $P(Y|do(X))$. To model the distribution of $p_i $, we introduce a degradation-based weight estimator (DWE) to output the probability of selecting a particular $p_i $, which can be seen in Sect. \ref{sec:wpb}. We construct CWP-Net with the wavelet prompt block inserted into the skip-connection of U-Net architecture to model the probability $\mathbb{P}(Y|X,P=p_i) $, which can be seen in Fig. \ref{fig:method} and Sect. \ref{sec:overall_frame}.

\textbf{Solution for spurious correlation challenge.} Due to the spurious correlation challenge, the estimated $\mathbb{P}(P=p_i)$ will have an unexpected correlation with $C$ except for the expected correlation with $D$. To obtain accurate values of $p_i$, we need to block $C \dashleftarrow \dashrightarrow T$, and there exists only one path $ D \leftarrow T \rightarrow P$, ensuring $P \upmodels C$, as shown in Fig. \ref{fig_scm}(c). To remove the spurious correlation, we introduce the wavelet attention module, where $D$ is disentangled explicitly from $C$ and then is utilized to estimate $p_i$, as elaborated on in Sect. \ref{sec:wa} and Sect. \ref{sec:wpb}.

\begin{figure*}
\centering
\vspace{-0.5cm}
\includegraphics[width=0.95\textwidth]{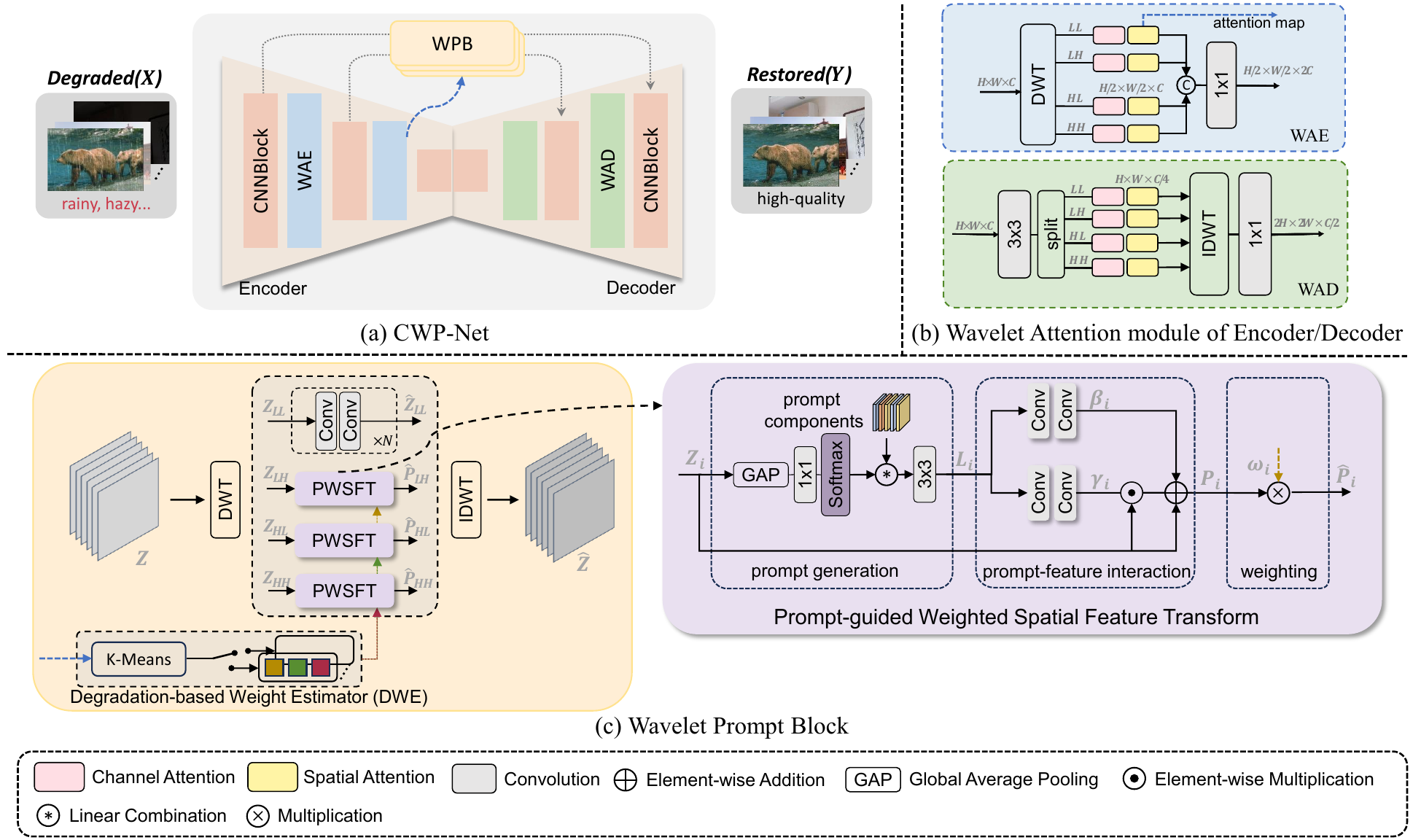}
\vspace{-0.2cm}
\caption{Architecture of our proposed CWP-Net. (a) provides an overview of the U-shaped network for AiOIR, in which WAE and WAD are symmetrically placed at each level of the encoder and decoder, and WPB is inserted into the skip connection. (b) WAE and WAD extract attention maps of different wavelet subbands in both spatial and channel dimensions. (c) WPB generates weighted prompted wavelet subbands for backdoor adjustment based on the low-frequency wavelet attention map. $Z_{*}$ and $P_{*}$ denote the wavelet subbands before and after prompt-based modulation, respectively.}
\vspace{-0.5cm}
\label{fig:method} 
\end{figure*}

\subsection{Wavelet Attention Module for Disentangling Degradation Features from Semantic Features}
\label{sec:wa}

\textbf{Addressing spurious correlation challenge via the wavelet attention module.} To mitigate the spurious correlation, we need to disentangle the degradation representation from the semantic representation explicitly to represent $D$. Then the degradation representation is utilized to obtain the values of $p_i$, thereby eliminating the unexpected correlation between $P$ and $C$. Concretely, considering that the degradation features are more significant in the low-frequency part of images than other frequency parts, as illustrated in Fig. \ref{fig:alternative_variable}(a), the wavelet attention module opts for attention maps of low-frequency features as degradation representations, which focus more on the degraded regions in the image and are semantically independent. We will detail the architecture of the wavelet attention module below.

\textbf{Implementation.} The architecture of wavelet attention module is depicted in Fig. \ref{fig:method}(b). Specifically, we decouple the original features into different frequency subbands through wavelet decomposition and apply attention modules to each wavelet coefficient separately. Only the attention map of low-frequency parts is utilized as the degradation representation. The wavelet attention modules are symmetrically inserted in each scale of both the encoder and decoder. Due to the disparity in feature size between the encoder and decoder (i.e., the encoder downsamples the features and the decoder upsamples the features), we design the wavelet attention module of the encoder and the wavelet attention module of the decoder respectively, which will be explained in detail below.

\begin{figure}[htbp]
\centering
\includegraphics[width=\linewidth,scale=1.00]{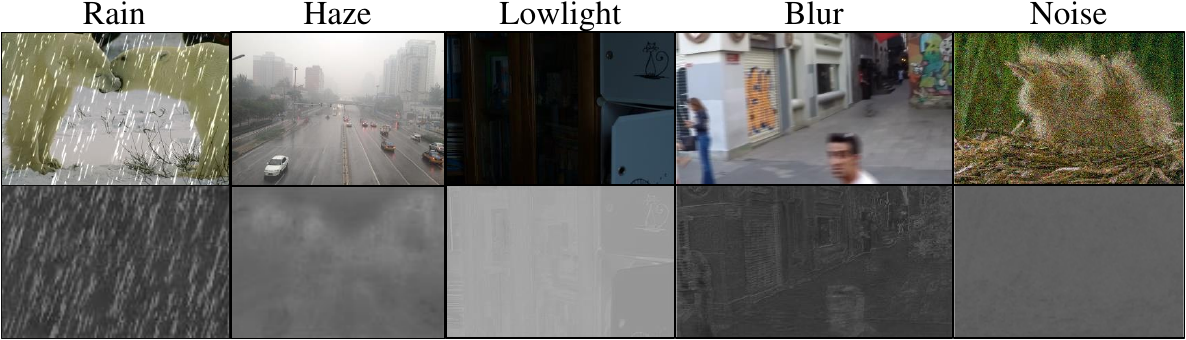}
\vspace{-0.5cm}
\caption{Visualization of the attention maps of the low-frequency subband in WAE. The first row depicts the input degraded image, and the second row illustrates the corresponding attention map. It provides evidence that the map focuses on the degraded regions and decouples the degradation features from the semantic features.}
\vspace{-0.5cm}
\label{fig:attention_map}
\end{figure}

1) \textit{Wavelet attention module of encoder.} WAE first transforms spatial features into frequency domain through discrete wavelet transform $\rm{DWT(\cdot)}$ and obtains four wavelet coefficients.  Inspired by \cite{woo2018cbam}, channel attention and spatial attention are then sequentially applied to each wavelet coefficient to adaptively extract important information on the channel and spatial dimension, respectively. These coefficients are then concatenated together and reduced by a $1\times1$ convolution in the channel dimension. Since $\rm{DWT(\cdot)}$ reduces the feature resolution to half of the original, we omit the downsampling layer and replace it with WAE. The overall process is expressed as follows:
\begin{equation}
\begin{aligned}
    &F_{LL}^e,F_{LH}^e,F_{HL}^e,F_{HH}^e=\mathrm{DWT}(F^e), \\
    &\hat{F}_{j}^e=SA_j(CA_j(F_j^{e})),j\in\{LL,LH,HL,HH\},\\  &\hat{F}^{e}=Conv1(concat(\hat{F}_{LL}^e,\hat{F}_{LH}^e,\hat{F}_{HL}^e,\hat{F}_{HH}^e)),
\end{aligned}
\end{equation}
where $Conv1$ represents the $1\times1$ convolution layer, $F^e$ and $\hat{F}^{e}$ represent the input and output features of WAE, $CA_{j}$ and $SA_{j}$ represent the channel attention and spatial attention corresponding to the $j$-th wavelet coefficient, respectively.

2) \textit{Wavelet attention module of decoder.} As WAE has transformed the features into the frequency domain, we consider using a $3\times3$ convolution to extract high- and low-frequency features in the decoder, as illustrated in Fig. \ref{fig:method}(b). Similar to WAE, we apply channel attention and spatial attention on each wavelet feature to refine the features. The refined wavelet features are then fed into $\rm{IDWT(\cdot)}$ to obtain the spatial domain representation, followed by a $1\times1$ convolution to increase the channel dimension. Similarly, $\rm{IDWT(\cdot)}$ increases the spatial resolution by twice the original, so the original upsampling layer is substituted by WAD. Overall, the process of WAD is represented as follows:
\begin{equation}
\begin{aligned}
    &F_{LL}^d,F_{LH}^d,F_{HL}^d,F_{HH}^d=Conv3(F^d),\\
    &\hat{F}_{j}^d=SA_j(CA_j(F_{j}^d)),j\in\{LL,LH,HL,HH\},\\
    &\hat{F}^{d}=Conv1(\mathrm{IDWT}(\hat{F}_{LL}^d,\hat{F}_{LH}^d,\hat{F}_{HL}^d,\hat{F}_{HH}^d)),
\end{aligned}
\end{equation}
where $Conv3$ represents the $3\times3$ convolution layer, $F^d$ and $\hat{F}^{d}$ represent the input and output features of WAD, respectively.

\textit{Channel attention and spatial attention.}  For input features $\mathbf{F} \in \mathbb{R}^{H\times W\times C }$, the channel attention module learns the attention vectors $\mathbf{M}_c \in \mathbb{R}^{1\times1\times C}$ which represent the importance of different channels and are multiplied by input features to obtain channel-updated features. Similarly, the spatial attention module learns the attention maps $\mathbf{M}_s \in \mathbb{R}^{H\times W\times1}$ multiplied by input features to obtain spatial-updated features.

Additionally, the spatial attention map of the low-frequency part $F_{LL}^e$ in WAE of the second scale is utilized as the degradation representation. Visualization examples of the attention map are shown in Fig. \ref{fig:attention_map}, which can demonstrate its focus solely on the degraded features.

\subsection{Wavelet Prompt Block for Exploring Alternative Variable}
\label{sec:wpb}

\textbf{Addressing biased estimation challenge via the wavelet prompt block.} The proposed wavelet prompt block is designed to estimate the probability $\mathbb{P}(P=p_i)$ of the alternative variable. Concretely, the implications of $p_i$ can be represented by:
\begin{equation}
    \begin{aligned}
        % &\mathcal{P}=\left \{ P_{LL},P_{LH},P_{HL},P_{HH} \right \},\\
        % &\Omega =\left \{ (\omega_{LL}, \omega_{LH},\omega_{HL},\omega_{HH}) | \omega_{*}\in\left [ 0,1 \right ]  \right \}, \\
        p_i&=\left ( \hat{P}_{LL}, \hat{P}_{LH}, \hat{P}_{HL} ,\hat{P}_{HH}\right )\\
        &=\left ( \omega_{LL}\cdot P_{LL}, \omega_{LH}\cdot P_{LH}, \omega_{HL}\cdot P_{HL},\omega_{HH}\cdot P_{HH}\right ),
    \end{aligned}
\end{equation}
where $P_{*}$ represents features' wavelet subbands modulated by the prompt components, $\omega_{*}$ denotes the weights of specific subbands, and $\hat{P}_{*}$ indicates weighted prompted wavelet subbands. Our WPB primarily comprises two modules: a degradation-based weight estimator (DWE) and a prompt-guided weighted spatial feature transform (PWSFT) module. The DWE module is employed to determine the value of the weights $\omega_*$ based on the degradation representations, avoiding the need for expert knowledge to determine whether $\omega_*$ is 0 or 1. Then the weights are further used by the PWSFT module to obtain $\hat{P}_{*}$.  In the following, we present the detailed architectures of PWSFT and DWE, respectively.

\textbf{Implementation.} The architecture of wavelet prompt block is illustrated in Fig. \ref{fig:method}(c). Specifically, $\rm{DWT(\cdot)}$ is firstly applied on the encoder features $Z$ to obtain four coefficients.  As shown in Fig. \ref{fig:alternative_variable}(a), each degradation pattern exhibits severe degradation in the low-frequency subband. Accordingly, we incorporate this prior knowledge by utilizing $N$ plain convolution layers to refine the low-frequency subband $Z_{LL}$ (i.e., set $\omega_{LL}=0$). For the other three high-frequency subbands $Z_{*}$, the corresponding weighted prompted wavelet subbands $\hat{P}_{*}$ are generated through the DWE module and PWSFT modules. Finally, the refined $\hat{Z}_{LL}$ and $\hat{P}_{*}$ are transformed back to the spatial domain $\hat{Z}$ using $\rm{IDWT(\cdot)}$, which is described as follows:
\begin{equation}
    \hat{Z}=\mathrm{IDWT}(\hat{Z}_{LL}, \hat{P}_{LH}, \hat{P}_{HL}, \hat{P}_{HH}).
\end{equation}

Next, we detail the specific implementation of PWSFT and DWE, respectively.

1) \textit{Prompt-guided weighted spatial feature transform.} Overall, the PWSFT module obtains weighted prompted wavelet subbands using the following \textit{weighting} procedure:
\begin{equation}
    \hat{P}_{j}=\omega _{j}*P_{j},j\in\{LH,HL,HH\},
\end{equation}
where $\hat{P}_{j}$ represents weighted prompted wavelet subbands, $\omega_{j}$ denotes the weights derived by DWE, and $P_{j}$ is obtained by modulating the wavelet features $Z_{j}$ through \textit{prompt generation} and \textit{prompt-feature interaction} procedures.

Specifically, for each wavelet subband, multiple learnable prompts are maintained to serve as task-aware priors. Inspired by \cite{potlapalli2024promptir}, these prompts are dynamically weighted and combined in an input-conditioned manner to generate a unified prompt. Subsequently, we employ this unified prompt to interact with and modulate the corresponding wavelet subbands. To encourage spatially adaptive interactions that facilitate pixel-level restoration, we adopt the spatial feature transform, which generates pixel-wise scaling and shifting parameters to facilitate prompt-feature interaction. We next present the formalized operations of the above procedure.

\begin{table*}[!ht]
\vspace{-0.5cm}
\caption{Quantitative results of different methods under the five-pattern setting. PSNR($\uparrow$) and SSIM($\uparrow$) are reported. The best and second-best results are marked in \textbf{bold} and \underline{underlined}, respectively.}
\vspace{-0.2cm}
\label{tab:contrast_five}
\centering
\begin{tabular}{l|cc|cc|cccccc|cc}
\toprule 
\multirow{3}{*}{Method} & \multicolumn{2}{c|}{\multirow{2}{*}{\makecell[c]{Dehazing} }} & \multicolumn{2}{c|}{\multirow{2}{*}{\makecell[c]{Deraining}}} & \multicolumn{6}{c|}{Denoising} & \multicolumn{2}{c}{\multirow{2}{*}{Average}}\\
 & & & & &\multicolumn{2}{c}{$\sigma=15$} & \multicolumn{2}{c}{$\sigma=25$} & \multicolumn{2}{c|}{$\sigma=50$} & & \\
 & PSNR & SSIM & PSNR & SSIM & PSNR & SSIM & PSNR & SSIM & PSNR & SSIM & PSNR & SSIM \\
\midrule
BRDNet \cite{tian2020image} & 23.23 & 0.895& 27.42 & 0.895& 32.26 & 0.898& 29.76 & 0.836& 26.34 & 0.836& 27.80 & 0.843\\
 LPNet \cite{gao2019dynamic} & 20.84 & 0.828& 24.88 & 0.784& 26.47 & 0.778& 24.77 & 0.748& 21.26 & 0.552&23.64 & 0.738\\
 FDGAN \cite{dong2020fd} & 24.71 & 0.929& 29.89 & 0.933& 30.25 & 0.910& 28.81 & 0.868& 26.43 & 0.776&28.02 & 0.883\\
 MPRNet \cite{zamir2021multi} & 25.28 & 0.955& 33.57 & 0.954& 33.54 & 0.927& 30.89 & 0.880& 27.56 & 0.779&30.17 & 0.899\\
 \midrule
 DL \cite{fan2019general} & 26.92 & 0.931& 32.62 & 0.931& 33.05 & 0.914& 30.41 & 0.861& 26.90 & 0.740&29.98 & 0.875\\
 AirNet \cite{li2022all} & 27.94 & 0.962& 34.90 & 0.968& 33.92 & 0.933& 31.26 & 0.888& 28.00 & 0.797&31.20 & 0.910\\
 PromptIR \cite{potlapalli2024promptir} & 30.58 & 0.974& 36.37 & 0.972& 33.98 & 0.933& 31.31 & 0.888& 28.06 & 0.799&32.06 & 0.913\\
 Lin et al. \cite{lin2023improving} & \underline{31.63} & \underline{0.980}& \underline{37.58} & \underline{0.979}& \underline{34.01} & \underline{0.933}& \underline{31.39} & \underline{0.890}& \underline{28.18} & \underline{0.802}& \underline{32.56} & \underline{0.916}\\
 \midrule
\rowcolor{orange}
\textbf{CWP-Net}& \textbf{33.21} & \textbf{0.983} & \textbf{38.68} & \textbf{0.984} & \textbf{34.14} & \textbf{0.936} & \textbf{31.48} & \textbf{0.893}  & \textbf{28.22} & \textbf{0.805} & \textbf{33.15} & \textbf{0.920} \\
 \textit{Gains}& +1.58& +0.003& +1.10& +0.005& +0.13& +0.003& +0.09& +0.003& +0.04& +0.003& +0.59&+0.004\\
  \bottomrule
\end{tabular}
\vspace{-0.3cm}
\end{table*}

\textit{Prompt generation.} Following \cite{potlapalli2024promptir}, we first generate the input-conditioned prompt by a combination of $M$ learnable prompt components, which can be expressed as follows:
\begin{equation}
\begin{aligned}
     &L_{j} = Conv3_{j}(\sum_{c=1}^{M} \alpha_{j,c} *L_{j,c}), j\in \{LH,HL,HH\},\\
     &\alpha_{j} = Softmax(Conv1_{j}(\mathrm{GAP}(Z_{j}))),
\end{aligned}
\end{equation}
where $L_{j,c}\in\mathbb{R} ^{H\times W \times C}$ denotes learnable prompt components, $\alpha_{j}\in\mathbb{R} ^{1\times 1 \times M}$  denotes the weight of each learnable prompt component, $\rm{GAP(\cdot)}$ denotes global average pooling operator, and $L_{j}$ denotes input-conditioned prompt.

\textit{Prompt-feature interaction}. The prompt is expected to modulate input features at the pixel level, so spatial feature transform (SFT) \cite{wang2018recovering} is utilized to achieve prompt-feature interaction. SFT learns a pair of affine transformation parameters (i.e., scaling parameters $\gamma_{j}$ and shifting parameters $\beta_{j}$) conditioned on prompts and uses these parameters to modulate the original features. Mathematically:
\begin{equation}
\begin{aligned}
    &\beta_{j}=Conv1_{j}(Conv1_{j}(L_{j})),\\
    &\gamma_{j}=Conv1_{j}(Conv1_{j}(L_{j})),\\
    &P_{j} = \gamma _{j} \odot  Z_{j} + \beta _{j} + Z_{j},j\in\{LH,HL,HH\}.
\end{aligned}
\end{equation}

2) \textit{Degradation-based weight estimator.} As outlined in Fig. \ref{fig:alternative_variable}, 
different degradation patterns exhibit discrepancies in distortion-mild wavelet subbands. Therefore, the weights of three prompted high-frequency wavelet subbands are determined by the specific degradation pattern. To be more specific, for the $K$ degradation patterns within the training set, we define the learnable parameter $W\in\mathbb{R} ^{3\times K}$, which is optimized through backpropagation with values ranging from 0 to 1. $W_{j,k}$ represents the weight of the $j$-th high-frequency subband corresponding to the $k$-th degradation pattern. To determine the degradation pattern corresponding to the current degraded image, we employ K-Means clustering \cite{jain1988algorithms} on the degradation representations obtained by the WAE module. Then the weight $\omega_{j}$ is chosen based on the category to which the current sample belongs.

The DWE is optimized through a delayed update strategy to ensure the robustness of the degradation representation. To be more specific, $W$ is initialized to a matrix with all values to be one and updated after $N_{w}$ epochs.

\subsection{Overall Framework}
\label{sec:overall_frame}
The overall architecture of CWP-Net is illustrated in Fig. \ref{fig:method}(a). We adopt the widely used U-shaped encoder-decoder architecture for efficient AiOIR. Specifically, the input images $\mathbf{X}\in \mathbb{R}^{H\times W\times3} $ with different degradations are passed through a $3\times3$ convolutional layer to obtain shallow embeddings. Subsequently, the embeddings are then fed into the multi-scale encoder to obtain the hierarchical feature representations. Each scale of encoder comprises a CNNBlock \cite{cui2023irnext} and a wavelet attention module of encoder (WAE) to perform feature transformation. Symmetrically, each scale of the decoder consists of a wavelet attention module of decoder (WAD) and a CNNBlock to progressively restore the image to its original resolution and finally obtain the restored image $\mathbf{Y}\in \mathbb{R}^{H\times W\times3}$. Given that the encoder features contain more task-specific information, our proposed wavelet prompt block (WPB) is integrated into the skip-connection to estimate  $\mathbb{P}(Y|X,P=p_i)$.
Additionally, we also incorporate the multi-input multi-output strategy in the architecture to facilitate progressive learning following \cite{cho2021rethinking,cui2023selective,tu2022maxim}.

\subsection{Loss Functions}
Our proposed CWP-Net is optimized by the combination of reconstruction loss and frequency loss.

\textbf{Reconstruction loss.} We use the $\mathcal{L}_1$  distance between restored image $Y$ and ground truth $\hat{Y}$ to measure the reconstruction performance, which is defined by:
\begin{equation}
    \mathcal{L} _{rec} =\sum_{k=1}^{3}  \left \| Y_k-\hat{Y_k} \right \| _1,
\end{equation}
where $k$ represents the index of multi-scale output.

\textbf{Frequency loss.} To enhance the recovery of high-frequency details, we apply a fast Fourier transform to the restored image and the ground truth, respectively, and obtain $ \mathcal{F}(\hat{Y})$ and $\mathcal{F}(Y) $. The $\mathcal{L}_1$ distance between $\mathcal{F}(\hat{Y})$ and $\mathcal{F}(Y)$ is employed as the frequency loss, which can be expressed as follows:
\begin{equation}
    \mathcal{L} _{fre} =\sum_{k=1}^{3}  \left \| \mathcal{F}(Y_k)-\mathcal{F}(\hat{Y_k}) \right \| _1,
\end{equation}
where $k$ represents the index of multi-scale output.

\textbf{Total loss.} The overall loss function for training CWP-Net is the combination of reconstruction loss and frequency loss:
\begin{equation}
    \mathcal{L} = \mathcal{L} _{rec} + \lambda \mathcal{L} _{fre},
\end{equation}
where $\lambda$ represents the balancing weight, which is empirically set to 0.1.

\begin{table*}[!ht]
\caption{Quantitative results of different methods under the seven-pattern setting. PSNR($\uparrow$) and SSIM($\uparrow$) are reported. The best and second-best results are marked in \textbf{bold} and \underline{underlined}, respectively.}
\vspace{-0.2cm}
\label{tab:contrast_seven}
\centering
\begin{tabular}{l|cc|cc|cc|cc|cc|cc}%七列
\toprule 
\multirow{2}{*}{Method} & \multicolumn{2}{c|}{\makecell[c]{Deraining} } & \multicolumn{2}{c|}{\makecell[c]{Dehazing}} & \multicolumn{2}{c|}{\makecell[c]{Low-Light}} & \multicolumn{2}{c|}{\makecell[c]{Deblurrring}} & \multicolumn{2}{c|}{\makecell[c]{Denoising}} & \multicolumn{2}{c}{\makecell[c]{Average}}\\
 & PSNR & SSIM & PSNR & SSIM & PSNR & SSIM & PSNR & SSIM & PSNR & SSIM & PSNR & SSIM \\
\midrule
NAFNet \cite{chen2022simple} & 35.56 & 0.967 & 25.23 & 0.939 & 20.49 & 0.809 & 26.53 & 0.808 & 31.02 & 0.883 & 27.76 & 0.886 \\
HINet \cite{chen2021hinet} &  35.67 & 0.969&  24.74 & 0.937&  19.47 & 0.800&  26.12 & 0.788&  31.00 & 0.881&  27.40 & 0.875\\
MPRNet \cite{zamir2021multi} &  \underline{38.16} & \underline{0.981}&  24.27 & 0.937&  20.84 & 0.824&  26.87 & 0.823&  31.35 & \textbf{0.889}&  28.27 & 0.890\\
DGUNet \cite{mou2022deep} &  36.62 & 0.971&  24.78 & 0.940&  \underline{21.87} & 0.823&  27.25 & 0.837&  31.10 & 0.883&  28.32 & 0.891\\
MIRNetv2 \cite{zamir2022learning} &  33.89 & 0.954&  24.03 & 0.927&  21.52 & 0.815&  26.30 & 0.799&  30.97 & 0.881&  27.34 & 0.875\\
SwinIR \cite{liang2021swinir} &  30.78 & 0.923&  21.50 & 0.891&  17.81 & 0.723&  24.52 & 0.773&  30.59 & 0.868&  25.04 & 0.835\\
Restormer \cite{zamir2022restormer} &  34.81 & 0.962&  24.09 & 0.927&  20.41 & 0.806&  27.22 & 0.829&  31.49 & 0.884&  27.60 & 0.881\\
\midrule
DL \cite{fan2019general} &  21.96 & 0.762&  20.54 & 0.826&  19.83 & 0.712&  19.86 & 0.672&  23.09 & 0.745&  21.05 & 0.743\\
Transweather \cite{valanarasu2022transweather} &  29.43 & 0.905&  21.32 & 0.885&  21.21 & 0.792&  25.12 & 0.757&  29.00 & 0.841&  25.22 & 0.836\\
TAPE \cite{liu2022tape}&  29.67 & 0.904&  22.16 & 0.861&  18.97 & 0.621&  24.47 & 0.763&  30.18 & 0.855&  25.09 & 0.801\\
AirNet \cite{li2022all} &  32.98 & 0.951&  21.04 & 0.884&  18.18 & 0.735&  24.35 & 0.781&  30.91 & 0.882&  25.49 & 0.846\\
IDR  \cite{zhang2023ingredient} &  35.63 & 0.965&  \underline{25.24} & \underline{0.943}&  21.34 & \underline{0.826}&  \underline{27.87} & \textbf{0.846}&  \textbf{31.60} & \underline{0.887}&  \underline{28.34} & \underline{0.893}\\
\midrule
\rowcolor{orange}    
\textbf{CWP-Net} &  \textbf{38.52}&  \textbf{0.983}&  \textbf{32.88}&  \textbf{0.981}&  \textbf{21.92}& \textbf{0.844}& \textbf{27.94}& \underline{0.845}& \underline{31.54}& 0.877& \textbf{30.56}& \textbf{0.906}\\
 \textit{Gains}& +0.36& +0.002& +7.64& +0.038& +0.05& +0.018& +0.07& -0.001& -0.06& -0.012& +2.22&+0.013\\
  \bottomrule
\end{tabular}
\end{table*}

\section{Experiments} 
Following the prior works \cite{potlapalli2024promptir,zhang2023ingredient}, we evaluate the proposed method on five image restoration tasks (i.e., image dehazing, deraining, denoising, deblurring, and low-light image enhancement) under two all-in-one settings: (a) five-pattern setting (denoising of $sigma$ in $\{15,25,50\}$, deraining, dehazing); (b) seven-pattern setting (denoising of $sigma$ in $\{15,25,50\}$, deraining, dehazing, deblurring, and low-light image enhancement).

\subsection{Experimental Setup}
\label{sec:expri_set}
\textbf{Implementation details.} 
To implement our CWP-Net, the number of plain convolutional layers $N$ is set to 4. For training, the total batch size is set to 48. We use Adam optimizer with the initial learning rate as 2e-4, which is gradually reduced to 1e-6 using the cosine annealing strategy. The training images are cropped to $224\times224$ patches and then augmented by random flipping and rotation. The overall model undergoes training for a total of 150 epochs, with the delayed update epoch $N_w$ set to 100. All experiments are conducted on RTX 3090 GPUs.

\textbf{Datasets.}
\label{sec:dataset}
Following the current method \cite{potlapalli2024promptir}, we prepare datasets for different tasks. For image denoising, a training dataset comprising 5,144 images is constructed by combining the WED dataset \cite{ma2016waterloo} and BSD400 dataset \cite{arbelaez2010contour}. To generate noisy counterparts, random Gaussian noise with three different degrees ($\sigma\in\{15,25,50\}$) is added to the clean images. The BSD68 dataset \cite{martin2001database} is used for testing. 

For the deraining task, the Rain100L dataset \cite{yang2020learning} containing 200 training images and 100 testing images is employed. In terms of dehazing task, the outdoor subset of SOTS dataset \cite{li2018benchmarking}, consisting of 72,135 images for training and 500 images for testing, is adopted.

The LOL-v1 dataset \cite{wei2018deep}, which consists of 485 training images and 15 testing images, is utilized for low-light image enhancement. The GoPro dataset \cite{Nah_2017_CVPR} is utilized for deblurring, comprising 2,103 training samples and 1,111 test samples. In five-pattern and seven-pattern AiOIR settings, datasets from the corresponding tasks are mixed together to perform training.

\textbf{Evaluation metrics.}
The commonly used metrics Peak Signal to Noise Ratio (PSNR) \cite{huynh2008scope} and Structural Similarity (SSIM) \cite{wang2004image} are adopted to evaluate the performance of AiOIR methods.

\begin{figure*}
\centering
\includegraphics[width=1\textwidth]{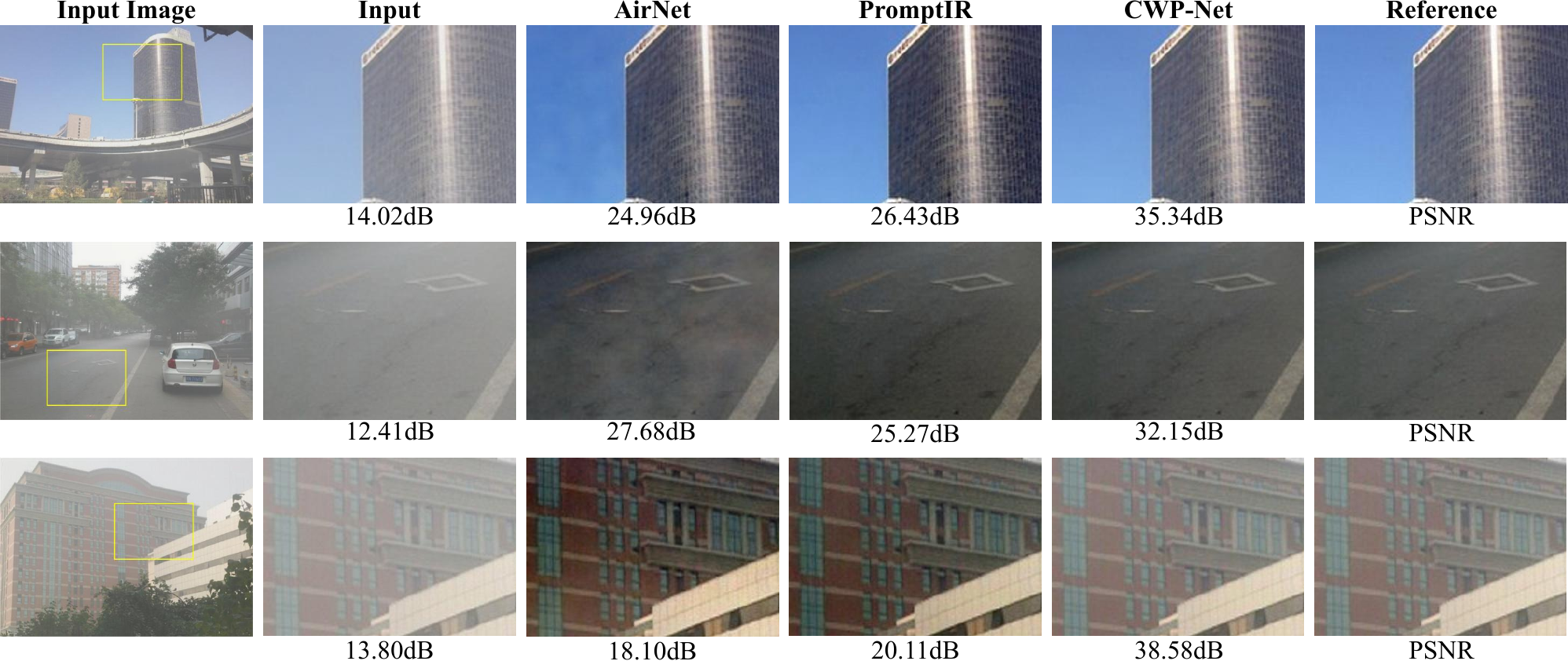}
\vspace{-0.5cm}
\caption{Visual comparisons of all-in-one methods on image dehazing. PSNR of the input image and images recovered by different approaches are listed below the corresponding samples. Our method restores building colors and structural textures that are closest to the reference. Zoom in for the best view.}
\vspace{-0.3cm}
\label{fig:compare_haze}
\end{figure*}

\begin{figure*}
\centering
\vspace{-0.5cm}
\includegraphics[width=1\textwidth]{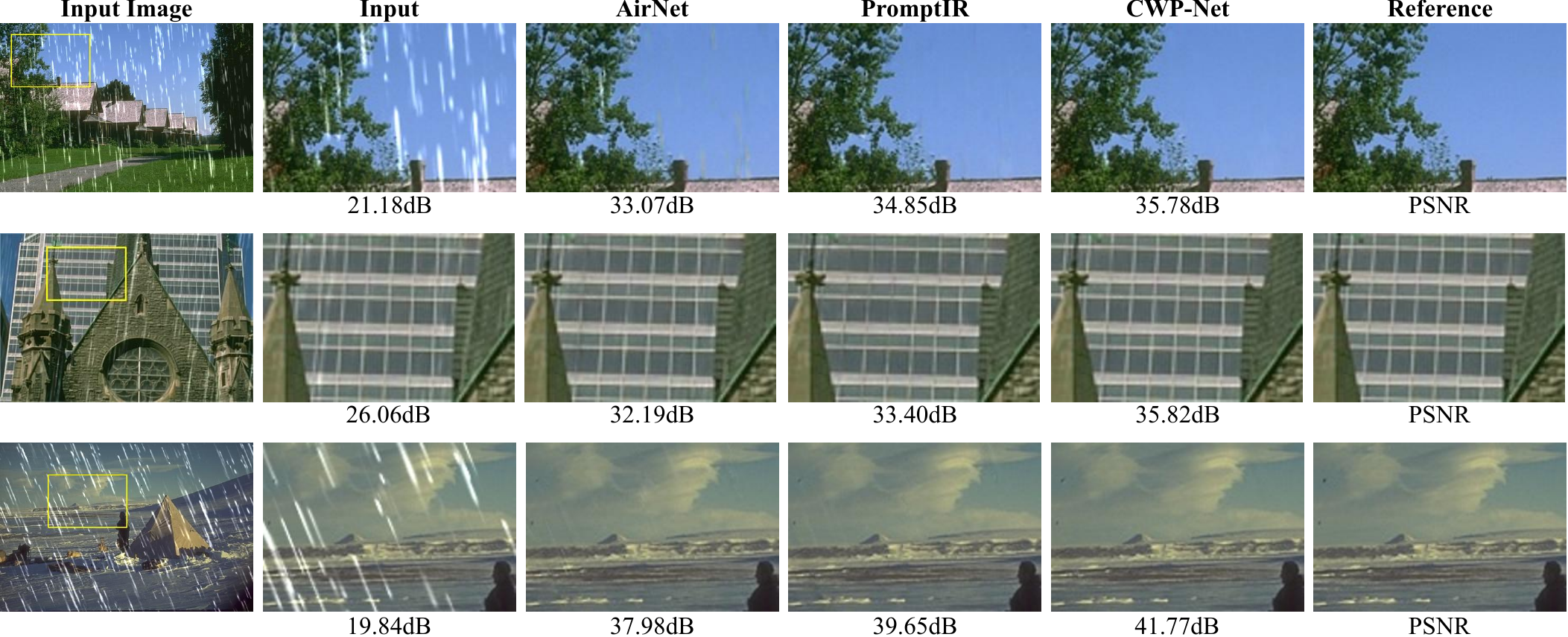}
\vspace{-0.6cm}
\caption{Visual comparisons of all-in-one methods on image deraining. PSNR of the input image and images recovered by different approaches are listed below the corresponding samples. The restored results of our method exhibit fewer rain streaks and a cleaner background. Zoom in for the best view.}
\label{fig:compare_rain}
\end{figure*}

\begin{figure*}
\centering
\includegraphics[width=1\textwidth]{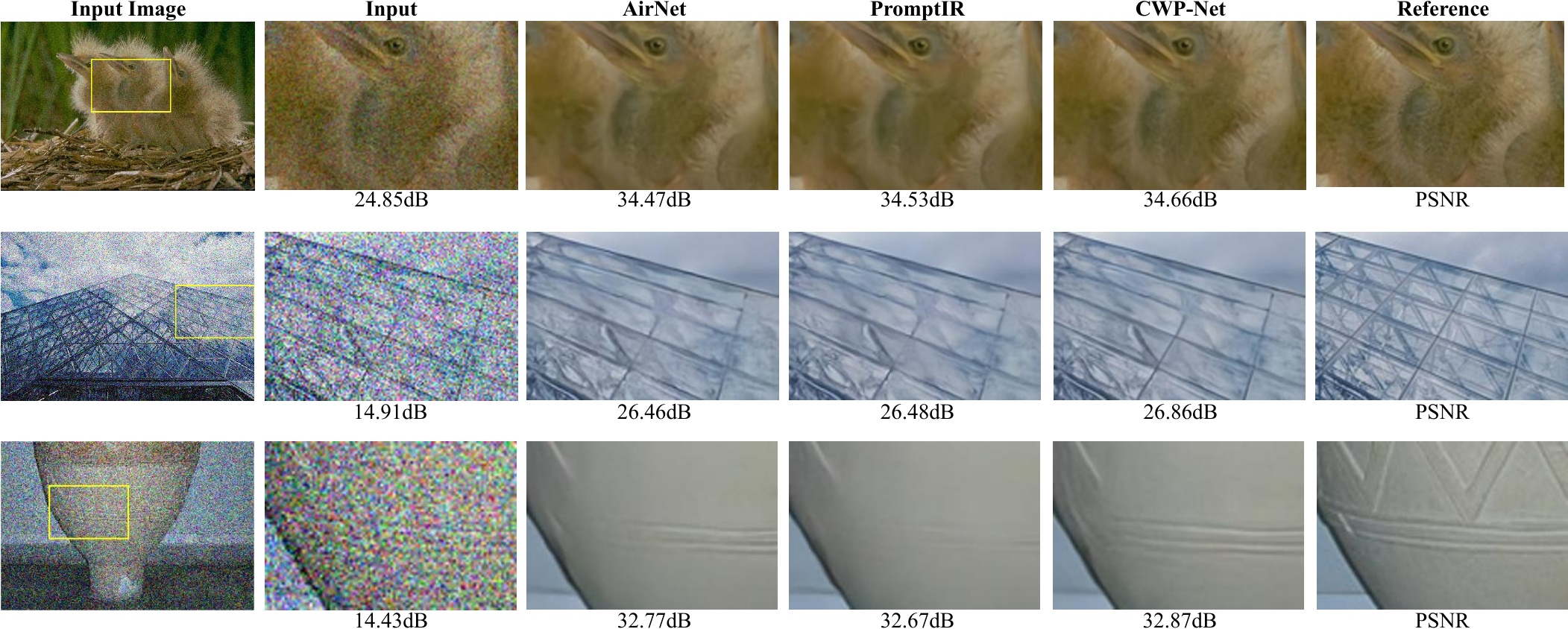}
\vspace{-0.6cm}
\caption{Visual comparisons of all-in-one methods on image denoising. PSNR of the input image and images recovered by different approaches are listed below the corresponding samples. Our method is able to maintain fine detail while removing noise, rather than producing overly smooth results. Zoom in for the best view.}
\vspace{-0.3cm}
\label{fig:compare_noise}
\end{figure*}

\subsection{Comparisons with State-Of-the-Art Methods: five-pattern setting}

To demonstrate the effectiveness of the proposed CWP-Net, we conduct comparisons with four general image restoration models (i.e., BRDNet \cite{tian2020image}, LPNet \cite{gao2019dynamic}, FDGAN \cite{dong2020fd}, MPRNet \cite{zamir2021multi}) and four specialized all-in-one image restoration models (i.e., DL \cite{fan2019general},  AirNet \cite{li2022all}, PromptIR \cite{potlapalli2024promptir}, Lin et al. \cite{lin2023improving}). All of these methods are trained under the all-in-one setting. 

Table \ref{tab:contrast_five} presents the quantitative comparison results. Our proposed method outperforms the previous best method. Specifically, CWP-Net yields a performance gain of 1.58dB in the image dehazing task, and 1.1dB in the image deraining task over Lin et al. Compared to previous methods, CWP-Net obtains superior performance among three noise levels and provides 0.13dB PSNR boost in the noise level of $\sigma=15$. It is worth noting that Lin et al. utilize large-scale pre-trained models, CLIP \cite{radford2021learning}, and Stable Diffusion \cite{rombach2022high} to facilitate the restoration process. In contrast, our method solely relies on the training data mentioned in Sect. \ref{sec:expri_set} and achieves a 0.59dB PSNR improvement over Lin et al. when averaged across five degradation patterns. Compared to PromptIR which prompts the features in the spatial domain with all frequency subbands, our method is more effective and achieves a 1.09dB performance gain on average. AirNet learns the representation of degradation through contrastive learning to guide the restoration process. However, due to the lack of robustness in the degradation representation, it achieves suboptimal performance, particularly in the dehazing task. Our method surpasses AirNet by 1.95dB PSNR on average.

We also provide visual examples for qualitative comparisons from Fig. \ref{fig:compare_haze} to Fig. \ref{fig:compare_noise}. It is evident that our method effectively removes three types of degradation from input images and restores realistic colors and fine textures that closely resemble the ground truth. Instead, AirNet and PromptIR are unable to effectively remove rain streaks from images and perform poorly in recovering local details destroyed by noise. Additionally, these methods also introduce unrealistic colors when removing haze.

\subsection{Comparisons with State-of-the-Art Methods: seven-pattern setting}
We further evaluate our proposed method in a seven-pattern setting, compared with several general image restoration approaches (i.e., NAFNet \cite{chen2022simple}, HINet \cite{chen2021hinet}, MPRNet \cite{zamir2021multi}, DGUNet \cite{mou2022deep}, MIRNetv2 \cite{zamir2022learning}, SwinIR \cite{liang2021swinir}, Restormer \cite{zamir2022restormer}) and all-in-one image restoration approaches (i.e., DL \cite{fan2019general}, Transweather \cite{valanarasu2022transweather}, TAPE \cite{liu2022tape}, AirNet \cite{li2022all}, IDR \cite{zhang2023ingredient}). The results shown in Table \ref{tab:contrast_seven} demonstrate that our CWP-Net surpasses both general restoration methods and all-in-one restoration methods in terms of PSNR and SSIM under seven-pattern setting, showcasing the robustness of our method. When averaged across seven degradation patterns, our method achieves a 2.22dB performance gain over the second-best all-in-one method IDR \cite{zhang2023ingredient} and a 5.07dB performance gain over AirNet \cite{li2022all}.

\begin{table*}[!ht]
\vspace{-0.3cm}
\caption{Additional quantitative results on the balanced test set under the three-pattern setting. PSNR and SSIM are reported.}
\vspace{-0.2cm}
\label{tab:contrast_unbiased}
\centering
\begin{tabular}{l|cc|cc|cccccc|cc}%
\toprule 
\multirow{3}{*}{Method} & \multicolumn{2}{c|}{\multirow{2}{*}{\makecell[c]{Dehazing} }} & \multicolumn{2}{c|}{\multirow{2}{*}{\makecell[c]{Deraining}}} & \multicolumn{6}{c|}{Denoising} & \multicolumn{2}{c}{\multirow{2}{*}{Average}}\\
 & & & & &\multicolumn{2}{c}{$\sigma=15$} & \multicolumn{2}{c}{$\sigma=25$} & \multicolumn{2}{c|}{$\sigma=50$} & & \\
 & PSNR & SSIM & PSNR & SSIM & PSNR & SSIM & PSNR & SSIM & PSNR & SSIM & PSNR & SSIM \\
\midrule
AirNet & 19.46& 0.892& 35.29& 0.968& 36.16& 0.953& 33.69& 0.925& 30.32& 0.863& 30.98& 0.920\\
PromptIR & 19.32& 0.883& 37.35& 0.978& 36.18& 0.953& 33.72& 0.925& 30.35& 0.864& 31.38 & 0.921 \\
\rowcolor{orange}
\textbf{CWP-Net} & \textbf{21.34}& \textbf{0.916}& \textbf{38.47}& \textbf{0.984}& \textbf{36.34}& \textbf{0.955}& \textbf{33.91}& \textbf{0.928}& \textbf{30.58}&\textbf{ 0.869}& \textbf{32.13} &\textbf{ 0.931} \\

\bottomrule
\end{tabular}
\end{table*}

\begin{table*}[!ht]
\caption{Ablation studies of the proposed individual modules on overall performance. PSNR of different models is reported.}
\vspace{-0.3cm}
\label{tab:aba_individual}
\centering
\begin{tabular}{ccccccccccccc}%
\toprule 
Model & Baseline & WAE & WAD & WPB & Derain & Dehaze & Deblur & Low-Light & Denoise & Average & Param(M) & FLOPs(G)\\
\midrule
$(a)$ & \ding{51} & \ding{55} & \ding{55} & \ding{55} & 35.03 & 30.19 & 25.87 & 21.73 & 30.77 & 28.72 & 14.75& 128.93\\
$(b)$ & \ding{51} & \ding{51} & \ding{55} & \ding{55} & 36.52 & 31.02 & 26.02 & 21.64 & 30.92 & 29.22 & 14.72& 128.63\\
$(c)$ & \ding{51} & \ding{51} & \ding{51} & \ding{55} & 36.53& 31.15& 26.03& 22.04& 30.90& 29.33 & 14.74& 125.61\\
$(d)$ & \ding{51} & \ding{51} & \ding{55} & \ding{51} & 36.87 & 31.02 & 26.17 & 22.41 & 31.03 & 29.50 & 15.47& 133.56\\
$(e)$ & \ding{51} & \ding{55} & \ding{51}& \ding{51}& 36.84& 30.92& 26.06& 22.35& 31.00& 29.43& 15.53& 130.85\\
\rowcolor{orange}
$(f)$ & \ding{51} & \ding{51} & \ding{51} & \ding{51} & \textbf{37.48}& \textbf{31.21}& \textbf{26.49}& \textbf{22.92}& \textbf{31.06}& \textbf{29.83} & 15.50 & 130.55 \\
\bottomrule
\end{tabular}
\end{table*}

\begin{table*}[!ht]
\caption{Various prompt-feature interaction modules for CWP-Net. PSNR of different models is reported. FLOPs are measured on the patch size of $256 \times 256 \times 3$.}
\vspace{-0.2cm}
\label{tab:aba_interaction}
\centering
\begin{tabular}{l|ccccc|c|cc}%
\toprule 
Method & Derain & Dehaze & Deblur & Low-light & Denoise & Average & Params (M) & FLOPs (G)\\
\midrule
Concat$\&$Self-Attention & 37.22& 31.13& \textbf{26.51}& 22.24& 31.06& 29.63& 16.25 & 135.52 \\
Cross Attention & 36.93& \textbf{31.48}&26.25 & 22.51& 30.97& 29.63 & 15.64&  131.53\\
\rowcolor{orange}
SFT (ours) &\textbf{37.48}& 31.21& 26.49& \textbf{22.92}& \textbf{31.06}& \textbf{29.83}  & \textbf{15.50} & \textbf{130.55 }\\
\bottomrule
\end{tabular}
\vspace{-0.3cm}
\end{table*}

\begin{table}[!ht]
\caption{Ablation studies of numbers of clusters.}
\vspace{-0.2cm}
\label{tab:abla_clusters}
\centering
\begin{tabular}{cc>{\columncolor{orange}}cc}%
\toprule 
Number of clusters & 3 & 5 & 7\\
\midrule
Average PSNR & 29.77 & \textbf{29.83} & 29.80 \\
\bottomrule
\end{tabular}
\vspace{-0.3cm}
\end{table}

\subsection{Additional Results of Generalization Ability}
\label{sec:exp_unbiased}
Based on the discussion in Fig. \ref{fig_sc}, it is evident that the scenes of different tasks exhibit domain biases. Therefore, we refer to the original training data and test data as the imbalanced training set and imbalanced test set, respectively.
In order to demonstrate the generalization ability of our method in scenarios where spurious correlations between degradation patterns and semantic features do not exist, we construct a balanced test set for further evaluation. To remove domain bias in data, the clean images of the balanced test set are collected from the combination of the ground truth of each imbalanced test set. To ensure the consistency of degradation as much as possible, the degradation simulation method for rain and haze is consistent with the approach in \cite{yang2017deep} and \cite{li2018benchmarking}, respectively. We utilize gamma correction to simulate low-light environments with the detailed parameters that can be found in the all-in-one method \cite{kong2024towards}. To simulate motion blur, we utilize the popular blur kernels proposed in \cite{lee2020progressive} to convolve clear images. As for noisy images, Gaussian noise with three noise degrees is added consistent with Sect. \ref{sec:expri_set}.

Since models of the seven-pattern setting do not release their checkpoints, we verify the proposed method for AiOIR under the five-pattern setting and present the results in Table \ref{tab:contrast_unbiased}. Two all-in-one methods with official model weights provided are compared. Our CWP-Net has demonstrated superior performance over these two methods in all tasks regarding PSNR and SSIM, showcasing its ability to extract real degradation information that is irrelevant to semantics. In contrast, AirNet and PromptIR exhibit poor generalization performance in real scenarios where domain bias is not present.

This observation is further supported by our case studies in Fig. \ref{fig:case_study}. In the standard AiOIR setting, training datasets often contain scene-level sampling biases, where certain semantics frequently co-occur with specific degradations. When such biased co-occurrences no longer exist, as in our balanced test set, existing models tend to misinterpret semantic structures as degradation cues and consequently suffer severe performance drops. Visual examples in Fig. \ref{fig:case_study} demonstrate that these methods struggle to restore challenging scenes, such as indoor environments or crowded pedestrian areas, which are relatively rare in the original dehazing dataset. In contrast, CWP-Net achieves more reliable color and structure restoration in these cases. This improved generalization stems from our architecture design, where wavelet-based deconfounding suppresses spurious semantic–degradation dependencies by separating degradation-related information from semantic content. 

\begin{figure}
    \centering
    \includegraphics[width=\linewidth]{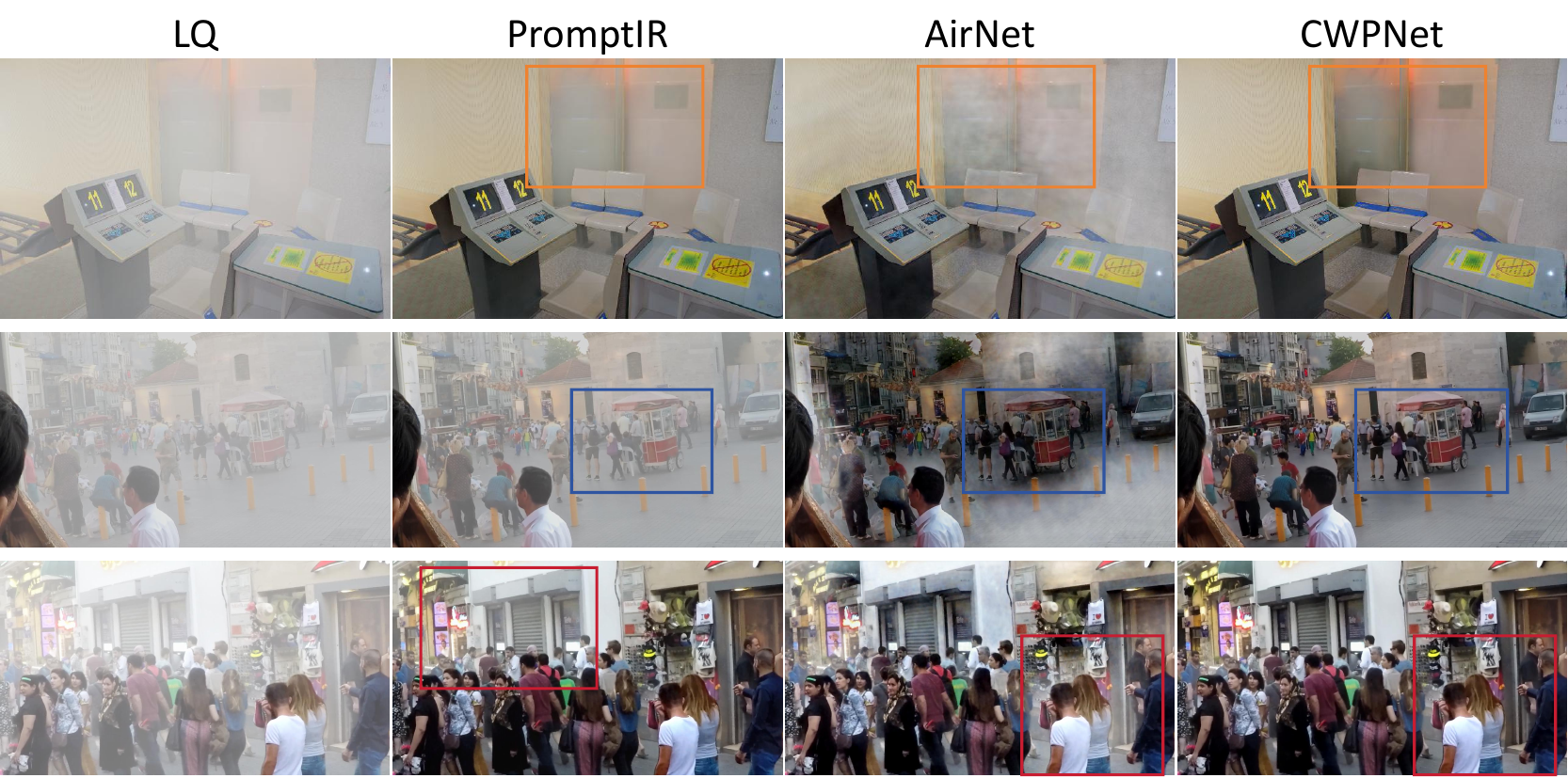}
    \caption{The case study on the generalization performance of CWP-Net under distribution shift.}
    \label{fig:case_study}
\end{figure}

\subsection{Ablation Study}
In this section, we conduct ablation studies to investigate the effect of each proposed component on the overall performance of our method. All studies are performed on a sampled version of the training data under the seven-pattern setting. The sampled data consists of 24,100 images, with the data ratio of each pattern being consistent with the original data.

\textbf{Effect of individual architecture modules.} We incorporate each module into the baseline gradually to show the impact of our proposed wavelet attention module and wavelet prompt block. The quantitative results are reported in Table \ref{tab:aba_individual}. The comparison between ``Model $(a)$'' and ``Model $(c)$'' demonstrates that the incorporation of the wavelet attention module of encoder and decoder can provide a performance gain over the baseline model in various tasks and achieve a 0.61dB overall PSNR improvement. This can be attributed to the explicit modeling of attention maps for different frequency subbands, which promotes the model to learn important specific frequency subbands in specific tasks.

Furthermore, we have also performed a complementary experiment by removing each module individually from the final CWP-Net to analyze the impact of each component in isolation. Removing the Wavelet Prompt Block (i.e., Model $(c)$) leads to the most significant performance drop (average –0.50 dB), highlighting its essential role in performing causal deconfounding through backdoor adjustment. Removing the WAE also results in a notable decrease, with an average 0.40dB PSNR drop. This is because WAE provides the degradation representation for the DWE module.

In variant $(e)$, we used raw features after CNN blocks (instead of the spatial attention map of the low-frequency parts) as input to the K-Means clustering process. These raw features contain both high-level semantic information and degradation features, making them poor degradation representations. The coupling of degradation representations across tasks explains the significant performance decline when WAE is removed. Moreover, removing WAD (Model (d)) causes a 0.33 dB average PSNR drop, confirming its importance in refining features during decoding.

\begin{table}
\centering
\caption{Efficiency comparisons of existing methods under the five-pattern setting. The inference time per image is calculated based on the average result of 100 forward propagations.}
\label{tab:eff1}

\begin{tabular}{l|ccc}
\toprule
Method & Params(M) & FLOPs(G) & Inference time(ms)\\
\midrule
BRDNet & 1.11 & 278 & 62\\
LPNet & 10.07 & 247.82 & 41.61\\
FDGAN & 11.80 & 69.16 & 17.94\\
MPRNet & 20.13 & 1707.36 & 84.21\\
\midrule
DL & 2.09 & 0.13 & 6.57\\
AirNet & 5.77 & 301.27 & 109.83\\
PromptIR & 32.97 & 158.14 & 82.93\\
Lin et al. & 112.05 & 332.08 & 128.51\\
\midrule
CWP-Net & 15.50 & 130.55 & 34.32\\
\bottomrule
\end{tabular}
\end{table}

\begin{table}
\centering
\caption{Efficiency comparisons of existing methods under the seven-pattern setting. The inference time per image is calculated based on the average result of 100 forward propagations.}
\label{tab:eff2}

\begin{tabular}{l|ccc}
\toprule
Method & Params(M) & FLOPs(G) & Inference time(ms)\\
\midrule
NAFNet & 17.11& 15.96& 18.90\\
HINet & 88.67& 170.49& 18.20\\
MPRNet & 15.74& 850.04& 60.52\\
DGUNet & 17.33& 866.17& 105.10\\
MIRNetv2 & 5.86& 140.92& 46.41\\
SwinIR & 0.91& 58.72& 122.46\\
Restormer & 26.13& 140.99& 76.04\\
\midrule
DL & 2.09& 0.13 & 6.57\\
Transweather & 37.93& 6.13& 10.18\\
TAPE & 1.07& 32.26&  37.80\\
AirNet & 8.93& 350.69& 153.24\\
IDR & 15.34& 114.18 & 71.29\\
\midrule
CWP-Net & 15.50& 130.55& 34.32\\
\bottomrule
\end{tabular}
\end{table}

\textbf{Prompt-feature interaction.} To investigate the effect of various prompt-feature interaction modules on our proposed CWP-Net, we substitute the SFT module with self-attention and cross-attention for comparison. Specifically, learnable prompt components $L_{j}$ and wavelet features $Z_{j}$ are concatenated along the channel dimension before being fed into the self-attention module for modulation. With regard to the cross-attention module, $L_{j}$ is utilized as the \textit{Query}, and $Z_{j}$ is utilized as the \textit{Key} and \textit{Value}. Comparison results are shown in Table \ref{tab:aba_interaction}. We also report the parameters and FLOPs for different methods. Since the self-attention and cross-attention modules model global dependencies, their performance is suboptimal in scenarios where local textures need to be reconstructed (such as deraining and denoising). Considering the average performance on all tasks along with the computational complexity, we select the SFT module as the implementation of our prompt-feature interaction procedure.

\textbf{Number of clusters.} Fig. \ref{fig:alternative_variable} presents the statistics of degradation-mild subbands for different degradation pattern datasets. However, it is worth noting that even different samples of the same task will show different degradation-mild subbands. For example, rain streaks with an angle less than 45 degrees degrade the horizontal subband more, while rain streaks with an angle greater than 45 degrees degrade the vertical subband. Thus, the number of clusters does not necessarily have to align strictly with the number of degradation patterns defined in this paper. To this end, we have studied the influence of cluster numbers on overall performance. As indicated in Table \ref{tab:abla_clusters},  increasing the number of clusters from five to seven results in a slight decrease in performance. Ultimately, we choose to set the number of clusters to five.

\subsection{Efficiency Analysis}

To evaluate the practical applicability of CWP-Net, we conduct a comprehensive analysis of model complexity, including the number of parameters, FLOPs, and runtime. All measurements are performed on a single NVIDIA RTX 3090 GPU using $256\times256\times3$ resolution images to ensure fair comparison across methods.

Table \ref{tab:eff1} and Table \ref{tab:eff2} present the computational complexity comparison with both general image restoration methods and state-of-the-art AiOIR approaches. Compared to Lin et al. and PromptIR, CWP-Net delivers comparable performance with fewer parameters and faster inference speed. While some lightweight methods (e.g., BRDNet, DL, and TAPE) achieve lower computational complexity, they are not specifically designed for AiOIR, resulting in significant performance degradation. CWP-Net achieves 5.35dB higher average PSNR than BRDNet, demonstrating that our method delivers substantially better restoration quality despite moderate increases in computational requirements.

When compared to methods with similar parameter counts, such as IDR for AiOIR or general restoration methods like MPRNet and DGUNet, CWP-Net consistently demonstrates superior restoration performance. Combined with the parameters analysis in Table \ref{tab:aba_individual}, it can be concluded that the additional components in CWP-Net introduce reasonable computational overhead while delivering performance improvements. 

\subsection{Limitations}
While wavelet decomposition offers powerful frequency-domain analysis, it inherently lacks contextual scene understanding. Specifically, in scenarios where degradation patterns visually resemble semantic content (e.g., rain streaks on a chessboard), the model may struggle to disentangle degradations from meaningful edges fully. During dehazing, the absence of an explicit spatial-depth coupling mechanism limits the model's ability to recover true color tones and depth-consistent brightness.

These limitations highlight opportunities for future work, such as integrating frequency-based methods with large-scale vision foundation models that offer strong spatial and semantic reasoning capabilities. Additionally, extending the framework to handle mixed or compound degradations within a single image remains an open challenge.

\section{Conclusion}
We construct the SCM to theoretically model the inherent causal mechanism of AiOIR methods and introduce the experiments to demonstrate the intrinsic reasons exacerbating the performance of blind AiOIR methods: the spurious correlation between non-degradation semantic features and degradation patterns, and the biased estimation of degradation patterns. To this end, we provide the causality-guided methodology to address the inherent defects of existing methods. According to the practical implementation, we propose CWP-Net, incorporating WAE, WAD, and WPB, to explore the deconfounded causation between the input degraded image and restored image, thereby improving the effectiveness and generalization of AiOIR models. Extensive experiments demonstrate that the proposed CWP-Net can consistently improve the performance of AiOIR models on two settings.

\bibliographystyle{IEEEtran}
\bibliography{egbib}

@article{tian2020image,
  title={Image denoising using deep CNN with batch renormalization},
  author={Tian, Chunwei and Xu, Yong and Zuo, Wangmeng},
  journal={Neural Networks},
  volume={121},
  pages={461--473},
  year={2020},
  publisher={Elsevier}
}

@inproceedings{gao2019dynamic,
  title={Dynamic scene deblurring with parameter selective sharing and nested skip connections},
  author={Gao, Hongyun and Tao, Xin and Shen, Xiaoyong and Jia, Jiaya},
  booktitle={Proceedings of the IEEE/CVF conference on computer vision and pattern recognition},
  pages={3848--3856},
  year={2019}
}

@inproceedings{dong2020fd,
  title={FD-GAN: Generative adversarial networks with fusion-discriminator for single image dehazing},
  author={Dong, Yu and Liu, Yihao and Zhang, He and Chen, Shifeng and Qiao, Yu},
  booktitle={Proceedings of the AAAI conference on artificial intelligence},
  volume={34},
  number={07},
  pages={10729--10736},
  year={2020}
}

@inproceedings{zamir2021multi,
  title={Multi-stage progressive image restoration},
  author={Zamir, Syed Waqas and Arora, Aditya and Khan, Salman and Hayat, Munawar and Khan, Fahad Shahbaz and Yang, Ming-Hsuan and Shao, Ling},
  booktitle={Proceedings of the IEEE/CVF conference on computer vision and pattern recognition},
  pages={14821--14831},
  year={2021}
}

@article{fan2019general,
  title={A general decoupled learning framework for parameterized image operators},
  author={Fan, Qingnan and Chen, Dongdong and Yuan, Lu and Hua, Gang and Yu, Nenghai and Chen, Baoquan},
  journal={IEEE transactions on pattern analysis and machine intelligence},
  volume={43},
  number={1},
  pages={33--47},
  year={2019},
  publisher={IEEE}
}

@inproceedings{li2022all,
  title={All-in-one image restoration for unknown corruption},
  author={Li, Boyun and Liu, Xiao and Hu, Peng and Wu, Zhongqin and Lv, Jiancheng and Peng, Xi},
  booktitle={Proceedings of the IEEE/CVF Conference on Computer Vision and Pattern Recognition},
  pages={17452--17462},
  year={2022}
}

@article{potlapalli2024promptir,
  title={PromptIR: Prompting for All-in-One Image Restoration},
  author={Potlapalli, Vaishnav and Zamir, Syed Waqas and Khan, Salman H and Shahbaz Khan, Fahad},
  journal={Advances in Neural Information Processing Systems},
  volume={36},
  year={2024}
}

@article{lin2023improving,
  title={Improving image restoration through removing degradations in textual representations},
  author={Lin, Jingbo and Zhang, Zhilu and Wei, Yuxiang and Ren, Dongwei and Jiang, Dongsheng and Zuo, Wangmeng},
  journal={arXiv preprint arXiv:2312.17334},
  year={2023}
}

@inproceedings{chen2023learning,
  title={Learning a sparse transformer network for effective image deraining},
  author={Chen, Xiang and Li, Hao and Li, Mingqiang and Pan, Jinshan},
  booktitle={Proceedings of the IEEE/CVF Conference on Computer Vision and Pattern Recognition},
  pages={5896--5905},
  year={2023}
}

@article{song2023vision,
  title={Vision transformers for single image dehazing},
  author={Song, Yuda and He, Zhuqing and Qian, Hui and Du, Xin},
  journal={IEEE Transactions on Image Processing},
  volume={32},
  pages={1927--1941},
  year={2023},
  publisher={IEEE}
}

@inproceedings{li2023efficient,
  title={Efficient and explicit modelling of image hierarchies for image restoration},
  author={Li, Yawei and Fan, Yuchen and Xiang, Xiaoyu and Demandolx, Denis and Ranjan, Rakesh and Timofte, Radu and Van Gool, Luc},
  booktitle={Proceedings of the IEEE/CVF Conference on Computer Vision and Pattern Recognition},
  pages={18278--18289},
  year={2023}
}

@inproceedings{cui2023focal,
  title={Focal network for image restoration},
  author={Cui, Yuning and Ren, Wenqi and Cao, Xiaochun and Knoll, Alois},
  booktitle={Proceedings of the IEEE/CVF international conference on computer vision},
  pages={13001--13011},
  year={2023}
}

@inproceedings{zamir2022restormer,
  title={Restormer: Efficient transformer for high-resolution image restoration},
  author={Zamir, Syed Waqas and Arora, Aditya and Khan, Salman and Hayat, Munawar and Khan, Fahad Shahbaz and Yang, Ming-Hsuan},
  booktitle={Proceedings of the IEEE/CVF conference on computer vision and pattern recognition},
  pages={5728--5739},
  year={2022}
}

@article{wang2024survey,
  title={A survey on facial image deblurring},
  author={Wang, Bingnan and Xu, Fanjiang and Zheng, Quan},
  journal={Computational Visual Media},
  volume={10},
  number={1},
  pages={3--25},
  year={2024},
  publisher={Springer}
}

@inproceedings{li2020all,
  title={All in one bad weather removal using architectural search},
  author={Li, Ruoteng and Tan, Robby T and Cheong, Loong-Fah},
  booktitle={Proceedings of the IEEE/CVF conference on computer vision and pattern recognition},
  pages={3175--3185},
  year={2020}
}

@article{luo2023controlling,
  title={Controlling vision-language models for universal image restoration},
  author={Luo, Ziwei and Gustafsson, Fredrik K and Zhao, Zheng and Sj{\"o}lund, Jens and Sch{\"o}n, Thomas B},
  journal={arXiv preprint arXiv:2310.01018},
  year={2023}
}

@inproceedings{chen2021pre,
  title={Pre-trained image processing transformer},
  author={Chen, Hanting and Wang, Yunhe and Guo, Tianyu and Xu, Chang and Deng, Yiping and Liu, Zhenhua and Ma, Siwei and Xu, Chunjing and Xu, Chao and Gao, Wen},
  booktitle={Proceedings of the IEEE/CVF conference on computer vision and pattern recognition},
  pages={12299--12310},
  year={2021}
}

@inproceedings{zhu2023learning,
  title={Learning weather-general and weather-specific features for image restoration under multiple adverse weather conditions},
  author={Zhu, Yurui and Wang, Tianyu and Fu, Xueyang and Yang, Xuanyu and Guo, Xin and Dai, Jifeng and Qiao, Yu and Hu, Xiaowei},
  booktitle={Proceedings of the IEEE/CVF Conference on Computer Vision and Pattern Recognition},
  pages={21747--21758},
  year={2023}
}

@inproceedings{dong2023multi,
  title={Multi-scale Residual Low-Pass Filter Network for Image Deblurring},
  author={Dong, Jiangxin and Pan, Jinshan and Yang, Zhongbao and Tang, Jinhui},
  booktitle={Proceedings of the IEEE/CVF International Conference on Computer Vision},
  pages={12345--12354},
  year={2023}
}

@article{zhang2024pan,
  title={Pan-Sharpening With Wavelet-Enhanced High-Frequency Information},
  author={Zhang, Jie and He, Xuanhua and Yan, Keyu and Cao, Ke and Li, Rui and Xie, Chengjun and Zhou, Man and Hong, Danfeng},
  journal={IEEE Transactions on Geoscience and Remote Sensing},
  volume={62},
  pages={1--14},
  year={2024},
  publisher={IEEE}
}

@inproceedings{zou2021sdwnet,
  title={Sdwnet: A straight dilated network with wavelet transformation for image deblurring},
  author={Zou, Wenbin and Jiang, Mingchao and Zhang, Yunchen and Chen, Liang and Lu, Zhiyong and Wu, Yi},
  booktitle={Proceedings of the IEEE/CVF international conference on computer vision},
  pages={1895--1904},
  year={2021}
}

@article{zhang2023underwater,
  title={Underwater image enhancement via weighted wavelet visual perception fusion},
  author={Zhang, Weidong and Zhou, Ling and Zhuang, Peixian and Li, Guohou and Pan, Xipeng and Zhao, Wenyi and Li, Chongyi},
  journal={IEEE Transactions on Circuits and Systems for Video Technology},
  year={2023},
  publisher={IEEE}
}

@inproceedings{zhang2023ingredient,
  title={Ingredient-oriented multi-degradation learning for image restoration},
  author={Zhang, Jinghao and Huang, Jie and Yao, Mingde and Yang, Zizheng and Yu, Hu and Zhou, Man and Zhao, Feng},
  booktitle={Proceedings of the IEEE/CVF Conference on Computer Vision and Pattern Recognition},
  pages={5825--5835},
  year={2023}
}

@article{ma2016waterloo,
  title={Waterloo exploration database: New challenges for image quality assessment models},
  author={Ma, Kede and Duanmu, Zhengfang and Wu, Qingbo and Wang, Zhou and Yong, Hongwei and Li, Hongliang and Zhang, Lei},
  journal={IEEE Transactions on Image Processing},
  volume={26},
  number={2},
  pages={1004--1016},
  year={2016},
  publisher={IEEE}
}

@article{arbelaez2010contour,
  title={Contour detection and hierarchical image segmentation},
  author={Arbelaez, Pablo and Maire, Michael and Fowlkes, Charless and Malik, Jitendra},
  journal={IEEE transactions on pattern analysis and machine intelligence},
  volume={33},
  number={5},
  pages={898--916},
  year={2010},
  publisher={IEEE}
}

@inproceedings{martin2001database,
  title={A database of human segmented natural images and its application to evaluating segmentation algorithms and measuring ecological statistics},
  author={Martin, David and Fowlkes, Charless and Tal, Doron and Malik, Jitendra},
  booktitle={Proceedings Eighth IEEE International Conference on Computer Vision. ICCV 2001},
  volume={2},
  pages={416--423},
  year={2001},
  organization={IEEE}
}

@inproceedings{yang2020learning,
  title={Learning texture transformer network for image super-resolution},
  author={Yang, Fuzhi and Yang, Huan and Fu, Jianlong and Lu, Hongtao and Guo, Baining},
  booktitle={Proceedings of the IEEE/CVF conference on computer vision and pattern recognition},
  pages={5791--5800},
  year={2020}
}

@article{li2018benchmarking,
  title={Benchmarking single-image dehazing and beyond},
  author={Li, Boyi and Ren, Wenqi and Fu, Dengpan and Tao, Dacheng and Feng, Dan and Zeng, Wenjun and Wang, Zhangyang},
  journal={IEEE Transactions on Image Processing},
  volume={28},
  number={1},
  pages={492--505},
  year={2018},
  publisher={IEEE}
}

@article{wei2018deep,
  title={Deep retinex decomposition for low-light enhancement},
  author={Wei, Chen and Wang, Wenjing and Yang, Wenhan and Liu, Jiaying},
  journal={arXiv preprint arXiv:1808.04560},
  year={2018}
}

@book{pearl2009causality,
  title={Causality},
  author={Pearl, Judea},
  year={2009},
  publisher={Cambridge university press}
}

@book{pearl2016causal,
  title={Causal inference in statistics: A primer},
  author={Pearl, Judea and Glymour, Madelyn and Jewell, Nicholas P},
  year={2016},
  publisher={John Wiley \& Sons}
}

@article{li2021online,
  title={Online rain/snow removal from surveillance videos},
  author={Li, Minghan and Cao, Xiangyong and Zhao, Qian and Zhang, Lei and Meng, Deyu},
  journal={IEEE Transactions on Image Processing},
  volume={30},
  pages={2029--2044},
  year={2021},
  publisher={IEEE}
}

@article{li2021deep,
  title={A deep learning based image enhancement approach for autonomous driving at night},
  author={Li, Guofa and Yang, Yifan and Qu, Xingda and Cao, Dongpu and Li, Keqiang},
  journal={Knowledge-Based Systems},
  volume={213},
  pages={106617},
  year={2021},
  publisher={Elsevier}
}

@article{xu2022illumination,
  title={Illumination guided attentive wavelet network for low-light image enhancement},
  author={Xu, Jingzhao and Yuan, Mengke and Yan, Dong-Ming and Wu, Tieru},
  journal={IEEE Transactions on Multimedia},
  volume={25},
  pages={6258--6271},
  year={2022},
  publisher={IEEE}
}

@inproceedings{yu2021wavefill,
  title={Wavefill: A wavelet-based generation network for image inpainting},
  author={Yu, Yingchen and Zhan, Fangneng and Lu, Shijian and Pan, Jianxiong and Ma, Feiying and Xie, Xuansong and Miao, Chunyan},
  booktitle={Proceedings of the IEEE/CVF international conference on computer vision},
  pages={14114--14123},
  year={2021}
}

@article{lai2022face,
  title={Face deblurring using dual camera fusion on mobile phones},
  author={Lai, Wei-Sheng and Shih, Yichang and Chu, Lun-Cheng and Wu, Xiaotong and Tsai, Sung-Fang and Krainin, Michael and Sun, Deqing and Liang, Chia-Kai},
  journal={ACM Transactions on Graphics (TOG)},
  volume={41},
  number={4},
  pages={1--16},
  year={2022},
  publisher={ACM New York, NY, USA}
}

@inproceedings{li2019heavy,
  title={Heavy rain image restoration: Integrating physics model and conditional adversarial learning},
  author={Li, Ruoteng and Cheong, Loong-Fah and Tan, Robby T},
  booktitle={Proceedings of the IEEE/CVF conference on computer vision and pattern recognition},
  pages={1633--1642},
  year={2019}
}

@inproceedings{zheng2023curricular,
  title={Curricular contrastive regularization for physics-aware single image dehazing},
  author={Zheng, Yu and Zhan, Jiahui and He, Shengfeng and Dong, Junyu and Du, Yong},
  booktitle={Proceedings of the IEEE/CVF conference on computer vision and pattern recognition},
  pages={5785--5794},
  year={2023}
}

@article{zhang2021exposure,
  title={Exposure trajectory recovery from motion blur},
  author={Zhang, Youjian and Wang, Chaoyue and Maybank, Stephen J and Tao, Dacheng},
  journal={IEEE Transactions on Pattern Analysis and Machine Intelligence},
  volume={44},
  number={11},
  pages={7490--7504},
  year={2021},
  publisher={IEEE}
}

@inproceedings{fei2023generative,
  title={Generative diffusion prior for unified image restoration and enhancement},
  author={Fei, Ben and Lyu, Zhaoyang and Pan, Liang and Zhang, Junzhe and Yang, Weidong and Luo, Tianyue and Zhang, Bo and Dai, Bo},
  booktitle={Proceedings of the IEEE/CVF Conference on Computer Vision and Pattern Recognition},
  pages={9935--9946},
  year={2023}
}

@article{guo2024mambair,
  title={Mambair: A simple baseline for image restoration with state-space model},
  author={Guo, Hang and Li, Jinmin and Dai, Tao and Ouyang, Zhihao and Ren, Xudong and Xia, Shu-Tao},
  journal={arXiv preprint arXiv:2402.15648},
  year={2024}
}

@inproceedings{park2023all,
  title={All-in-one image restoration for unknown degradations using adaptive discriminative filters for specific degradations},
  author={Park, Dongwon and Lee, Byung Hyun and Chun, Se Young},
  booktitle={2023 IEEE/CVF Conference on Computer Vision and Pattern Recognition (CVPR)},
  pages={5815--5824},
  year={2023},
  organization={IEEE}
}

@article{li2023prompt,
  title={Prompt-in-prompt learning for universal image restoration},
  author={Li, Zilong and Lei, Yiming and Ma, Chenglong and Zhang, Junping and Shan, Hongming},
  journal={arXiv preprint arXiv:2312.05038},
  year={2023}
}

@inproceedings{he2016deep,
  title={Deep residual learning for image recognition},
  author={He, Kaiming and Zhang, Xiangyu and Ren, Shaoqing and Sun, Jian},
  booktitle={Proceedings of the IEEE conference on computer vision and pattern recognition},
  pages={770--778},
  year={2016}
}

@article{simonyan2014very,
  title={Very deep convolutional networks for large-scale image recognition},
  author={Simonyan, Karen and Zisserman, Andrew},
  journal={arXiv preprint arXiv:1409.1556},
  year={2014}
}

@inproceedings{cui2023irnext,
  title={Irnext: Rethinking convolutional network design for image restoration},
  author={Cui, Yuning and Ren, Wenqi and Yang, Sining and Cao, Xiaochun and Knoll, Alois},
  booktitle={International conference on machine learning},
  year={2023}
}

@inproceedings{cho2021rethinking,
  title={Rethinking coarse-to-fine approach in single image deblurring},
  author={Cho, Sung-Jin and Ji, Seo-Won and Hong, Jun-Pyo and Jung, Seung-Won and Ko, Sung-Jea},
  booktitle={Proceedings of the IEEE/CVF international conference on computer vision},
  pages={4641--4650},
  year={2021}
}

@inproceedings{cui2023selective,
  title={Selective frequency network for image restoration},
  author={Cui, Yuning and Tao, Yi and Bing, Zhenshan and Ren, Wenqi and Gao, Xinwei and Cao, Xiaochun and Huang, Kai and Knoll, Alois},
  booktitle={The Eleventh International Conference on Learning Representations},
  year={2023}
}

@inproceedings{tu2022maxim,
  title={Maxim: Multi-axis mlp for image processing},
  author={Tu, Zhengzhong and Talebi, Hossein and Zhang, Han and Yang, Feng and Milanfar, Peyman and Bovik, Alan and Li, Yinxiao},
  booktitle={Proceedings of the IEEE/CVF conference on computer vision and pattern recognition},
  pages={5769--5780},
  year={2022}
}

@inproceedings{huang2017wavelet,
  title={Wavelet-srnet: A wavelet-based cnn for multi-scale face super resolution},
  author={Huang, Huaibo and He, Ran and Sun, Zhenan and Tan, Tieniu},
  booktitle={Proceedings of the IEEE international conference on computer vision},
  pages={1689--1697},
  year={2017}
}

@inproceedings{woo2018cbam,
  title={Cbam: Convolutional block attention module},
  author={Woo, Sanghyun and Park, Jongchan and Lee, Joon-Young and Kweon, In So},
  booktitle={Proceedings of the European conference on computer vision (ECCV)},
  pages={3--19},
  year={2018}
}

@article{verma1993graphical,
  title={Graphical aspects of causal models},
  author={Verma, Thomas},
  journal={Technical R eport R-191, UCLA},
  year={1993}
}

@incollection{bareinboim2022pearl,
  title={On Pearl’s hierarchy and the foundations of causal inference},
  author={Bareinboim, Elias and Correa, Juan D and Ibeling, Duligur and Icard, Thomas},
  booktitle={Probabilistic and causal inference: the works of judea pearl},
  pages={507--556},
  year={2022}
}

@article{pearl2011graphical,
  title={Graphical models, causality, and intervention},
  author={Pearl, Judea},
  journal={},
  year={2011}
}

@inproceedings{wang2018recovering,
  title={Recovering realistic texture in image super-resolution by deep spatial feature transform},
  author={Wang, Xintao and Yu, Ke and Dong, Chao and Loy, Chen Change},
  booktitle={Proceedings of the IEEE conference on computer vision and pattern recognition},
  pages={606--615},
  year={2018}
}

@article{wang2004image,
  title={Image quality assessment: from error visibility to structural similarity},
  author={Wang, Zhou and Bovik, Alan C and Sheikh, Hamid R and Simoncelli, Eero P},
  journal={IEEE transactions on image processing},
  volume={13},
  number={4},
  pages={600--612},
  year={2004},
  publisher={IEEE}
}

@inproceedings{chen2022simple,
  title={Simple baselines for image restoration},
  author={Chen, Liangyu and Chu, Xiaojie and Zhang, Xiangyu and Sun, Jian},
  booktitle={European conference on computer vision},
  pages={17--33},
  year={2022},
  organization={Springer}
}

@inproceedings{chen2021hinet,
  title={Hinet: Half instance normalization network for image restoration},
  author={Chen, Liangyu and Lu, Xin and Zhang, Jie and Chu, Xiaojie and Chen, Chengpeng},
  booktitle={Proceedings of the IEEE/CVF conference on computer vision and pattern recognition},
  pages={182--192},
  year={2021}
}

@article{zamir2022learning,
  title={Learning enriched features for fast image restoration and enhancement},
  author={Zamir, Syed Waqas and Arora, Aditya and Khan, Salman and Hayat, Munawar and Khan, Fahad Shahbaz and Yang, Ming-Hsuan and Shao, Ling},
  journal={IEEE transactions on pattern analysis and machine intelligence},
  volume={45},
  number={2},
  pages={1934--1948},
  year={2022},
  publisher={IEEE}
}

@inproceedings{mou2022deep,
  title={Deep generalized unfolding networks for image restoration},
  author={Mou, Chong and Wang, Qian and Zhang, Jian},
  booktitle={Proceedings of the IEEE/CVF conference on computer vision and pattern recognition},
  pages={17399--17410},
  year={2022}
}

@inproceedings{liang2021swinir,
  title={Swinir: Image restoration using swin transformer},
  author={Liang, Jingyun and Cao, Jiezhang and Sun, Guolei and Zhang, Kai and Van Gool, Luc and Timofte, Radu},
  booktitle={Proceedings of the IEEE/CVF international conference on computer vision},
  pages={1833--1844},
  year={2021}
}

@inproceedings{liu2022tape,
  title={Tape: Task-agnostic prior embedding for image restoration},
  author={Liu, Lin and Xie, Lingxi and Zhang, Xiaopeng and Yuan, Shanxin and Chen, Xiangyu and Zhou, Wengang and Li, Houqiang and Tian, Qi},
  booktitle={European Conference on Computer Vision},
  pages={447--464},
  year={2022},
  organization={Springer}
}

@inproceedings{valanarasu2022transweather,
  title={Transweather: Transformer-based restoration of images degraded by adverse weather conditions},
  author={Valanarasu, Jeya Maria Jose and Yasarla, Rajeev and Patel, Vishal M},
  booktitle={Proceedings of the IEEE/CVF Conference on Computer Vision and Pattern Recognition},
  pages={2353--2363},
  year={2022}
}

@article{huynh2008scope,
  title={Scope of validity of PSNR in image/video quality assessment},
  author={Huynh-Thu, Quan and Ghanbari, Mohammed},
  journal={Electronics letters},
  volume={44},
  number={13},
  pages={800--801},
  year={2008},
  publisher={IET}
}

@inproceedings{yang2017deep,
  title={Deep joint rain detection and removal from a single image},
  author={Yang, Wenhan and Tan, Robby T and Feng, Jiashi and Liu, Jiaying and Guo, Zongming and Yan, Shuicheng},
  booktitle={Proceedings of the IEEE conference on computer vision and pattern recognition},
  pages={1357--1366},
  year={2017}
}

@article{lee2020progressive,
  title={Progressive semantic face deblurring},
  author={Lee, Tae Bok and Jung, Soo Hyun and Heo, Yong Seok},
  journal={IEEE Access},
  volume={8},
  pages={223548--223561},
  year={2020},
  publisher={IEEE}
}

@article{kong2024towards,
  title={Towards Effective Multiple-in-One Image Restoration: A Sequential and Prompt Learning Strategy},
  author={Kong, Xiangtao and Dong, Chao and Zhang, Lei},
  journal={arXiv preprint arXiv:2401.03379},
  year={2024}
}

@book{jain1988algorithms,
  title={Algorithms for clustering data},
  author={Jain, Anil K and Dubes, Richard C},
  year={1988},
  publisher={Prentice-Hall, Inc.}
}

@inproceedings{rombach2022high,
  title={High-resolution image synthesis with latent diffusion models},
  author={Rombach, Robin and Blattmann, Andreas and Lorenz, Dominik and Esser, Patrick and Ommer, Bj{\"o}rn},
  booktitle={Proceedings of the IEEE/CVF conference on computer vision and pattern recognition},
  pages={10684--10695},
  year={2022}
}

@inproceedings{radford2021learning,
  title={Learning transferable visual models from natural language supervision},
  author={Radford, Alec and Kim, Jong Wook and Hallacy, Chris and Ramesh, Aditya and Goh, Gabriel and Agarwal, Sandhini and Sastry, Girish and Askell, Amanda and Mishkin, Pamela and Clark, Jack and others},
  booktitle={International conference on machine learning},
  pages={8748--8763},
  year={2021},
  organization={PMLR}
}

@InProceedings{Nah_2017_CVPR,
author = {Nah, Seungjun and Hyun Kim, Tae and Mu Lee, Kyoung},
title = {Deep Multi-Scale Convolutional Neural Network for Dynamic Scene Deblurring},
booktitle = {Proceedings of the IEEE Conference on Computer Vision and Pattern Recognition (CVPR)},
month = {July},
year = {2017}
}

@article{xiao2025incorporating,
  title={Incorporating degradation estimation in light field spatial super-resolution},
  author={Xiao, Zeyu and Xiong, Zhiwei},
  journal={Computer Vision and Image Understanding},
  volume={252},
  pages={104295},
  year={2025},
  publisher={Elsevier}
}

@article{xiao2023dive,
  title={A dive into sam prior in image restoration},
  author={Xiao, Zeyu and Bai, Jiawang and Lu, Zhihe and Xiong, Zhiwei},
  journal={arXiv preprint arXiv:2305.13620},
  year={2023}
}

@article{refr3.1,
  title={MC-Blur: A comprehensive benchmark for image deblurring},
  author={Zhang, Kaihao and Wang, Tao and Luo, Wenhan and Ren, Wenqi and Stenger, Bj{\"o}rn and Liu, Wei and Li, Hongdong and Yang, Ming-Hsuan},
  journal={IEEE Transactions on Circuits and Systems for Video Technology},
  volume={34},
  number={5},
  pages={3755--3767},
  year={2023},
  publisher={IEEE}
}

@article{refr3.2,
  title={Adversarial spatio-temporal learning for video deblurring},
  author={Zhang, Kaihao and Luo, Wenhan and Zhong, Yiran and Ma, Lin and Liu, Wei and Li, Hongdong},
  journal={IEEE Transactions on Image Processing},
  volume={28},
  number={1},
  pages={291--301},
  year={2018},
  publisher={IEEE}
}

@article{refr3.4,
  title={Enhanced spatio-temporal interaction learning for video deraining: faster and better},
  author={Zhang, Kaihao and Li, Dongxu and Luo, Wenhan and Ren, Wenqi and Liu, Wei},
  journal={IEEE Transactions on Pattern Analysis and Machine Intelligence},
  volume={45},
  number={1},
  pages={1287--1293},
  year={2022},
  publisher={IEEE}
}

@article{refr3.5,
  title={MB-TaylorFormer V2: improved multi-branch linear transformer expanded by Taylor formula for image restoration},
  author={Jin, Zhi and Qiu, Yuwei and Zhang, Kaihao and Li, Hongdong and Luo, Wenhan},
  journal={IEEE Transactions on Pattern Analysis and Machine Intelligence},
  year={2025},
  publisher={IEEE}
}

@article{refr3.6,
  title={LLDiffusion: Learning degradation representations in diffusion models for low-light image enhancement},
  author={Wang, Tao and Zhang, Kaihao and Zhang, Yong and Luo, Wenhan and Stenger, Bj{\"o}rn and Lu, Tong and Kim, Tae-Kyun and Liu, Wei},
  journal={Pattern Recognition},
  volume={166},
  pages={111628},
  year={2025},
  publisher={Elsevier}
}

@article{refr3.7,
  title={All-in-one Weather-degraded Image Restoration via Adaptive Degradation-aware Self-prompting Model},
  author={Wen, Yuanbo and Gao, Tao and Li, Ziqi and Zhang, Jing and Zhang, Kaihao and Chen, Ting},
  journal={IEEE Transactions on Multimedia},
  year={2025},
  publisher={IEEE}
}

@article{spirtes2013causal,
  title={Causal inference in the presence of latent variables and selection bias},
  author={Spirtes, Peter L and Meek, Christopher and Richardson, Thomas S},
  journal={arXiv preprint arXiv:1302.4983},
  year={2013}
}

@book{spirtes2000causation,
  title={Causation, prediction, and search},
  author={Spirtes, Peter and Glymour, Clark N and Scheines, Richard},
  year={2000},
  publisher={MIT press}
}

@article{peters2016causal,
  title={Causal inference by using invariant prediction: identification and confidence intervals},
  author={Peters, Jonas and B{\"u}hlmann, Peter and Meinshausen, Nicolai},
  journal={Journal of the Royal Statistical Society Series B: Statistical Methodology},
  volume={78},
  number={5},
  pages={947--1012},
  year={2016},
  publisher={Oxford University Press}
}

@article{zhang2012kernel,
  title={Kernel-based conditional independence test and application in causal discovery},
  author={Zhang, Kun and Peters, Jonas and Janzing, Dominik and Sch{\"o}lkopf, Bernhard},
  journal={arXiv preprint arXiv:1202.3775},
  year={2012}
}

\appendix
\label{appendix}
\subsection{The Background of Causality}\label{background}
We will provide a detailed preliminary of the causal background knowledge required for our work. For further information, please refer to the literature \cite{pearl2009causality,pearl2016causal,bareinboim2022pearl}.

\textbf{Structural Causal Models.} To formally model the causal relationships between variables, we provide the definition of a structural causal model (SCM) below. All causal modeling analyses in AiOIR are based on the SCM framework.
\begin{definition}
    \label{SCM} \textbf{(Structural Causal Model)}
    A structural causal model $\mathcal{M}$ is a triple $\mathcal{M}=\left \langle U,V,F \right \rangle$, where:
\begin{enumerate}
    \item $U$ is the set of background variables, called \textit{exogenous variables}, which are determined by external factors outside the model and whose variations do not need to be explained.
    \item $V$ is a set of variables $\left\{ V_1,V_2,...,V_n \right\}$, called \textit{endogenous variables}, which are the internal variables of the model we are interested in studying to understand their causal relationships.
    \item $F$ is a set of functions $\left\{ f_1,f_2,...,f_n \right\}$ where each $f_i$ is a mapping from $U_i \cup Pa_i$ to $V_i$, with $U_i \subseteq U$ and $Pa_i \subseteq V \backslash V_i$. For each $f_i \in F$, we have $v_i = f_i(pa_i, u_i)$. That is, the function $f_i$ assigns a value to the variable $V_i$ based on the values of other variables in the model.
\end{enumerate}
\end{definition}
Here, we adopt the definition of \textit{causality}: if $V_j$ is in the domain of $f_i$, then variable $V_j$ is a \textit{direct cause} of variable $V_i$. If $V_j$ is a direct cause of $V_i$ or an indirect cause through other variables, then $V_j$ is a \textit{cause} of $V_i$. Note that in this study, we ignore exogenous variables $U$ and focus solely on the intrinsic causal relationships within the model.

Every SCM $\mathcal{M}$ has a corresponding directed graph $\mathcal{G}$, where the nodes in $\mathcal{G}$ represent the variables in $U \cup V$, and the directed edges between nodes represent the functions in $F$. In the graph $\mathcal{G}$, if node $V_i$ is a descendant of node $V_j$, then $V_j$ is a cause of $V_i$.

\textbf{Interventions and Do-calculus.} The causal model $\mathcal{M}$ describes intrinsic causal mechanisms, characterized by the observed distribution $\mathbb{P}_{\mathcal{M}}(V)=\prod_{i=1}^{n}\mathbb{P}(v_i\mid pa_i)$. Intervention\footnote{The definition here refers to the atomic intervention \cite{pearl2009causality}. For brevity, we intervene on only one variable.} is defined as forcing a variable $V_i$ to take on a fixed value $v$, modifying the model $\mathcal{M}=\left \langle U,V,F \right \rangle$ to $\mathcal{M}_v=\left \langle U,V,F_v \right \rangle$, where $F_v=\left \{ F \backslash f_i  \right \}\cup \left \{ V_i=v \right \}$. This is equivalent to removing $V_i$ from its original functional mechanism $v_i=f_i\left ( pa_i,u_i \right )$ and modifying this function to a constant function $V_i=v$. Formally, we denote the \textit{intervention} as $do(V_i=v)$, called the \textit{$do$-$calculus$}. It explores how causal mechanisms will change when external interventions, or experiments, are introduced. We denote the distribution after the intervention as $\mathbb{P}_{\mathcal{M}_{v}}(V)=\mathbb{P}(v_1,...,v_n\mid do(V_i=v))$, where
\begin{equation}
    \mathbb{P}(v_1,...,v_n\mid do(V_i=v))=
    \begin{cases}
    \prod_{j\neq i}\mathbb{P}(v_j\mid pa_j)& V_i=v \\
    0& V_i\neq v
    \end{cases}.
    \label{distribution}
\end{equation}

Therefore, if we want to calculate the causality of $V_i$ on $V_j$, we can use $\mathbb{P}(V_j|do(V_i))$ to replace the correlation $\mathbb{P}(V_j|V_i)$.

\textbf{Path and $d$-separation.} We recap two classic definitions \cite{pearl2016causal} to help us determine the independence between variables in the SCM graph. They enable us to avoid cumbersome probability calculations and instead obtain independence between variables directly from the graph.

\begin{definition}
    \label{path} \textbf{(Path)}
    In the SCM graph, the paths from variable $X$ to $Y$ include three types of structures: 1) chain structure: $A \rightarrow B\rightarrow C$ or $A \leftarrow B\leftarrow C$; 2) fork structure: $A \leftarrow B\rightarrow C$; 3) collider structure: $A \rightarrow B\leftarrow C$.
\end{definition}

\begin{definition}
    \label{separation} \textbf{($d$-separation)}
A path $p$ is blocked by a set of nodes $Z$ if and only if:
\begin{enumerate}
    \item $p$ contains a chain of nodes $A \rightarrow B\rightarrow C$ or a fork  $A \leftarrow B\rightarrow C$ such that the middle node $B$ is in $Z$, i.e., $A$ and $C$ are independent conditional on $B$, or
    \item $p$ contains a collider $A \rightarrow B\leftarrow C$ such that the collider node $B$ is not in $Z$, and no descendant of $B$ is in $Z$, i.e., $A$ and $C$ are independent without conditions.
\end{enumerate}
\end{definition}
If $Z$ blocks every path between two nodes $X$ and $Y$, then $X$ and $Y$ are \textit{$d$-separated}, conditional on $Z$, i.e., $X$ and $Y$ are independent conditional on $Z$, denoted as $X \upmodels Y \mid Z$.

\textbf{The Backdoor Criterion.} There is an \textit{empirical formula} for calculating the causation $\mathbb{P}(Y|do(X))$ of $X$ on $Y$, first proposed by \cite{pearl2011graphical}, known as the \textit{backdoor criterion}. Using the backdoor criterion, it is possible to determine which variables $Z$ should be conditioned on to find the causal relationship between any two variables $X$ and $Y$ in a causal model represented by a directed acyclic graph.

\begin{definition}
    \label{backdoor} \textbf{(Backdoor Criterion)}
Given an ordered pair of variables $(X, Y)$ in a directed acyclic graph, a set of variables $Z$ is said to satisfy the backdoor criterion with respect to $(X, Y)$ if: (1) $Z$ contains no descendant of $X$, and (2) $Z$ blocks every path between $X$ and $Y$ that contains an arrow into $X$.
\end{definition}

If a set of variables $Z$ satisfies the backdoor criterion with respect to $(X, Y)$, then the causation of $X$ on $Y$ can be computed using the following formula:
\begin{equation}
   \label{eq_backdoor}
    \mathbb{P}(Y=y|do(X=x))=\sum_z \mathbb{P}(Y=y|X=x,Z=z)\mathbb{P}(Z=z).
\end{equation}

\subsection{An Intuitive Example of Backdoor Adjustment}

To facilitate understanding, we provide an intuitive example of backdoor adjustment using a medical study scenario. Suppose we aim to determine whether a new drug positively affects recovery rates. We collect data on drug intake ($X$) and recovery outcomes ($Y$) for two subpopulations: men and women. As shown in Table \ref{tab:simpson-paradox}, within each subgroup, the drug appears beneficial (93\% vs. 87\% for men, 73\% vs. 69\% for women). However, when considering the entire population, non-drug users exhibit a higher recovery rate than drug users, which is the well-known Simpson's Paradox \cite{pearl2016causal}.

The cause of this paradox lies in the confounder (i.e., gender $Z$), which influences both drug intake and recovery rates. In the causal graph (Fig. \ref{fig:scm_s1}), this is represented as $ X \leftarrow Z \rightarrow Y$. Such a path is called the backdoor path, and directly estimating $P(Y\mid X)$ leads to misleading results due to spurious correlations.

Specifically, $P(Y=1\mid X=1)-P(Y=1\mid X=0)=0.78-0.83<0$, suggesting the drug has a negative effect.  However, when applying the Eq. \eqref{eq_backdoor} to obtain true causal effect, we have $P(Y=1|do(X=1)) - P(Y=1|do(X=0))=0.0502>0$, highlighting the importance of the backdoor adjustment to ensure accurate causal inference.

\begin{figure}
    \vspace{-0.3cm}
    \centering
    \includegraphics[width=0.6\linewidth,scale=1.00]{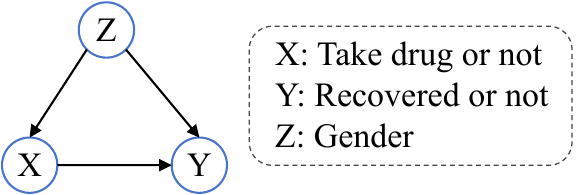}
    \vspace{-0.3cm}
    \caption{SCM graph of the drug study.}
    \vspace{-0.2cm}
    \label{fig:scm_s1}
\end{figure}

\begin{table}
    \centering
    \caption{Results of the drug study with gender being considered.}
    \vspace{-0.2cm}
    
    \adjustbox{max width=\linewidth}{
    \begin{tabular}{c|c|c}
    \toprule[1pt]
    ~ & Drug & No drug \\
    \midrule
    Men & 81 out of 87 recovered (93\%) & 234 out of 270 recovered (87\%) \\
    Women & 192 out of 263 recovered (73\%) & 55 out of 80 recovered (69\%) \\ 
    Combined data & 273 out of 350 recovered (78\%) & 289 out of 350 recovered (83\%) \\
    \bottomrule[1pt]
    \end{tabular}
    }
    \vspace{-0.3cm}
    \label{tab:simpson-paradox}
\end{table}

\subsection{Proof for Causal Effect Indentifiability}
\label{sec_identi}
With observable prompted wavelet subbands $P$, the SCM in Fig. \ref{fig_scm}(c) holds, and we employ the inference rules of do-calculus axioms to prove the identifiability of $P(Y|do(X))$ in Fig. \ref{fig_scm}(c) as follows.

\begin{theorem}[Rules of do Calculus]\label{docalculas}
    Let \(G\) be the directed acyclic graph associated with a causal model, and let \( P(\cdot) \) stand for the probability distribution induced by that model. For any disjoint subsets of variables \( X, Y, Z, \) and \( W \), we have the following rules.  
\begin{itemize}
    \item Rule 1 (Insertion/deletion of observations): 
    \begin{align*}
    &\text{if } (Y \perp\!\!\!\perp Z \mid X, W)_{G_{\overline{X}}},\\
    &P(y \mid do(x), z, w) = P(y \mid do(x), w). 
    \end{align*}
    \item Rule 2 (Action/observation exchange): 
    \begin{align*}
    &\text{if } (Y \perp\!\!\!\perp Z \mid X, W)_{G_{\overline{X}\underline{Z}}}, \\
    &P(y \mid do(x), do(z), w) = P(y \mid do(x), z, w). 
    \end{align*}
    \item Rule 3 (Insertion/deletion of actions): 
    \begin{align*}
    &\text{ if } (Y \perp\!\!\!\perp Z \mid X, W)_{G_{\overline{X},\overline{Z(W)}}},\\
    &P(y \mid do(x), do(z), w) = P(y \mid do(x), w), 
    \end{align*}
    where \( Z(W) \) is the set of \( Z \)-nodes that are not ancestors of any \( W \)-node in \(G_{\overline{X}} \).
\end{itemize}
\end{theorem}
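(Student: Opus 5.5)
Since this statement is Pearl's classical do-calculus, I would not derive it from the image-restoration model at all. Instead I would prove it for an arbitrary DAG $G$ using the truncated factorization in Eq.~\eqref{distribution} together with the $d$-separation criterion of Definition~\ref{separation}.

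\textbf{Augmented graph.} The device I would use is an augmented graph. For each variable $V$ in $Z$, add an exogenous regime indicator $F_V$ as a new parent of $V$. $F_V$ takes the value ``idle'' (the mechanism of $V$ is unchanged) or ``$do(v)$'' (the mechanism is replaced by the constant $v$). Conditioning on $F_Z=do(z)$ then reproduces exactly the truncated product of Eq.~\eqref{distribution}. Conditioning on $F_Z=$ idle reproduces the observational factorization. The augmented model still factorizes over a DAG, so it is Markov relative to it. Each rule then becomes a statement of the form ``$Y$ is $d$-separated from something given something'' in a suitably mutilated augmented graph. The common first step is this: once $do(x)$ is applied, the joint factorizes as $\prod_{V\notin X}P(v\mid pa_v)$, which is Markov relative to $G_{\overline{X}}$. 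Hence $d$-separation in $G_{\overline{X}}$ (or in its augmentation) implies the corresponding conditional independence in $P(\cdot\mid do(x))$.

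\textbf{Rule 1.} This follows immediately from the common first step. If $(Y\perp\!\!\!\perp Z\mid X,W)_{G_{\overline{X}}}$, then Markovness of $P(\cdot\mid do(x))$ relative to $G_{\overline{X}}$ gives $P(y\mid do(x),z,w)=P(y\mid do(x),w)$.

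\textbf{Rule 2.} Here I would show that $Y\perp\!\!\!\perp F_Z\mid X,Z,W$ holds in the augmentation of $G_{\overline{X}}$. This lets us swap $F_Z=do(z)$ for $F_Z=$ idle while $Z=z$ is observed, which is exactly the claimed exchange. Take any path from $F_Z$ to $Y$. It must begin $F_Z\to Z$. If it then leaves $Z$ along an arrow out of $Z$, then $Z$ is a conditioned non-collider and the path is blocked. Otherwise it continues $F_Z\to Z\leftarrow\cdots$, making $Z$ a collider; since $Z$ is conditioned on, such a path is open only if its remainder from $Z$ to $Y$ is open given $X,Z,W$. That remainder is a path starting with an arrow into $Z$, so it survives in $G_{\overline{X}\underline{Z}}$. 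The hypothesis of Rule 2 blocks it there. The remaining check is that deleting the out-edges of $Z$ does not change the blocking status of these paths.

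\textbf{Rule 3.} This is the main obstacle. I would show $Y\perp\!\!\!\perp F_Z\mid X,W$ in the augmentation, now without conditioning on $Z$. That independence gives $P(y\mid do(x),do(z),w)=P(y\mid do(x),w)$, since $F_Z=do(z)$ versus idle does not matter. The subtlety is the split of $Z$.
\begin{itemize}
\item For a node $Z'\in Z(W)$, i.e.\ one that is not an ancestor of $W$ in $G_{\overline{X}}$, the node $Z'$ is unconditioned and has no conditioned descendant. So any path $F_{Z'}\to Z'\leftarrow\cdots$ is blocked at the collider $Z'$. The only possible open paths therefore leave $Z'$ along an out-edge, and their remainder from $Z'$ to $Y$ is open in $G_{\overline{X},\overline{Z(W)}}$. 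The hypothesis blocks them.
\item For a node $Z''\notin Z(W)$, i.e.\ an ancestor of $W$, the collider $Z''$ can be activated through a conditioned descendant in $W$. So both out-going and in-coming continuations must be handled. This is why incoming edges of $Z''$ are kept in the test graph.
\end{itemize}
I would first verify the $Z(W)$ case carefully. I would then argue that for $Z''\notin Z(W)$, any open path from $F_{Z''}$ to $Y$ yields an open path from $Z''$ to $Y$ given $X,W$ in $G_{\overline{X},\overline{Z(W)}}$, again contradicting the hypothesis. This case analysis on how a path enters or leaves each $Z$-node, and whether that node is an ancestor of $W$, is where I expect most of the work. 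Here I would follow Pearl's original argument rather than re-deriving it from scratch.
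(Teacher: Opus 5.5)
The paper gives no proof of this statement. It quotes Pearl's do-calculus as background and then applies it in the identifiability derivation, using Rule 3 for $P(p\mid do(x))=P(p)$ and Rule 2 for $P(y\mid p,do(x))=P(y\mid p,x)$. Your augmented-graph strategy is Pearl's own proof and is the right route. Rule 1 is fine, as is the reduction of Rules 2 and 3 to separating $F_Z$ from $Y$ in the augmented $G_{\overline{X}}$. However, two steps do not go through as written.

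\textbf{Rule 2.} You reduce to a remainder path from $Z$ to $Y$ that is open given $\{X,Z,W\}$, and then say the hypothesis blocks it. But the hypothesis is separation given $\{X,W\}$ only, and conditioning additionally on $Z$ can open colliders. For example, take $X=W=\emptyset$, $Z=\{Z_1,Z_2\}$ and edges $Z_1\leftarrow A\to C\leftarrow B\to Y$, $C\to Z_2$. The remainder $Z_1\leftarrow A\to C\leftarrow B\to Y$ is open given $Z$, through $C$'s descendant $Z_2$, but blocked given $\emptyset$. What actually violates the hypothesis is a different path, $Z_2\leftarrow C\leftarrow B\to Y$. Your ``remaining check'' about deleting out-edges of $Z$ holds for a fixed conditioning set, so it does not address this.

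You need a rerouting step:
\begin{enumerate}
\item Truncate the remainder at its last $Z$-node.
\item If no collider on it lacks a descendant in $X\cup W$, the truncated path is already open given $\{X,W\}$, and you are done.
\item Otherwise, take the collider nearest $Y$ that has no descendant in $X\cup W$. Follow a directed path from it to the first $Z$-node $z'$ it reaches.
\item Splice this directed path onto the remainder at the last node where the two meet.
\end{enumerate}
The splice node is a non-collider outside $X\cup W$, and so is every node of the directed segment. The colliders left on the spliced path lie nearer $Y$ than the chosen one, so they have descendants in $X\cup W$. The result is a $z'$--$Y$ path open given $\{X,W\}$ in $G_{\overline{X}\underline{Z}}$, contradicting the hypothesis.

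\textbf{Rule 3.} Your second bullet is only announced, not proved. The first bullet claims the remainder lies in $G_{\overline{X},\overline{Z(W)}}$, but this is false when the remainder re-enters another $Z(W)$-node. For $F_{Z_1}\to Z_1\to A\to Z_2\to Y$ with $Z_1,Z_2\in Z(W)$, the edge $A\to Z_2$ is deleted in that graph.

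Both cases are settled at once if you split on the $Z$-node $Z_\ell$ of the open path nearest $Y$, rather than on the starting node.
\begin{enumerate}
\item If $Z_\ell\in Z(W)$, its $Y$-side edge must point out of $Z_\ell$. A collider at $Z_\ell$ is blocked, since neither $Z_\ell$ nor any of its descendants in $G_{\overline{X}}$ lies in $X\cup W$.
\item If instead its $F$-side edge points out of $Z_\ell$, follow the path from $Z_\ell$ toward $F_{Z_1}$. This gives a directed chain, which must stop at an open collider because the final edge $F_{Z_1}\to Z_1$ points back. That collider is in $W$ or has a descendant in $W$, so $Z_\ell$ would be an ancestor of $W$, a contradiction.
\item Hence the subpath from $Z_\ell$ to $Y$ uses no edge into a $Z(W)$-node.
\item Every collider on it keeps its directed path to $W$ in $G_{\overline{X},\overline{Z(W)}}$, because any $Z(W)$-node on that path would be an ancestor of $W$.
\item So the subpath is open given $\{X,W\}$ in that graph, contradicting the hypothesis.
\end{enumerate}
With these two lemmas added, your argument becomes a complete proof, which is more than the paper supplies.
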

\textbf{Proof:} We derive the closed-form solution of $P(Y|do(X))$ to prove its identifiability by using Theorem \ref{docalculas}. The figure \ref{fig:scm_q2} shows the subgraphs of Fig. \ref{fig_scm}(c) required for the do-calculus rules.

\begin{figure*} 
\vspace{-0.3cm}
    \centering
    \includegraphics[width=0.8\linewidth]{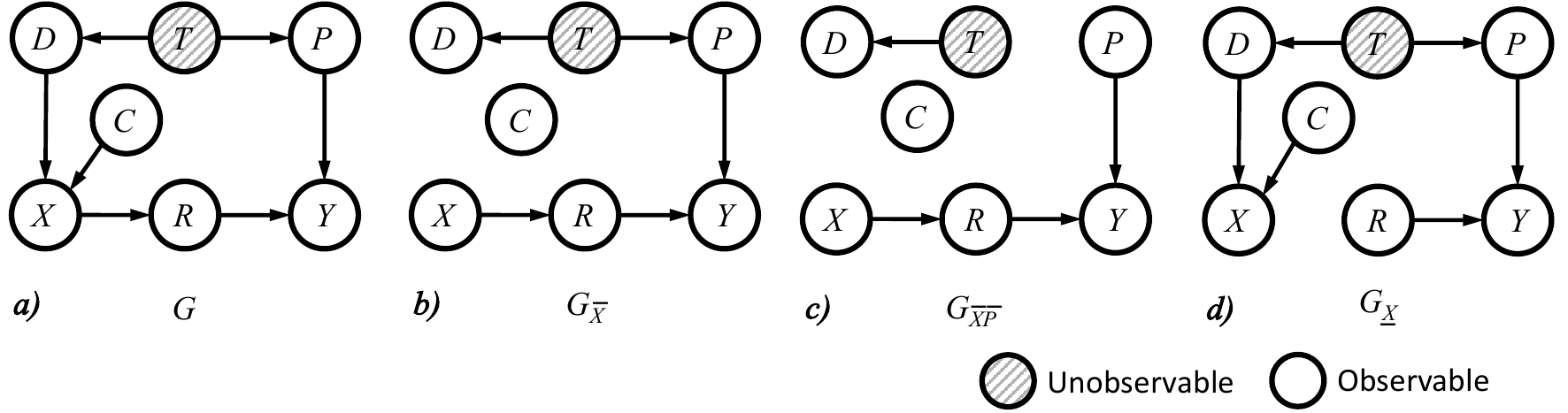}
    \vspace{-0.3cm}
    \caption{Subgraphs of G used in the derivation of causal effects.}
    \vspace{-0.2cm}
\label{fig:scm_q2}
\end{figure*}

Since $G_{\underline{X}}$ (see Fig. \ref{fig:scm_q2}(d)) contains a backdoor path from $X $ to $Y $: $X \leftarrow D \leftarrow T \rightarrow P \to Y $, we cannot apply Rule 2 to replace $do(x) $ with $x $. Naturally, we seek to block this path through variables on the path (such as $P$). This involves conditioning on and summing over all values of $P$:
\begin{equation}
    P(y | do(x)) = \sum_{p} P(y|p, do(x)) P(p|do(x)) \label{do}
\end{equation}
Now we need to handle two terms containing $do(x) $: $P(y|p, do(x)) $ and $P(p|do(x)) $. Since $P$ and $X $ are d-separated in $G_{\overline{X},\overline{P}} $ (see Fig. \ref{fig:scm_q2}(c)), the latter can be easily computed through the deletion action of Rule 3. Here we take $Z=P $, $X=X $, $W=\varnothing $:
\begin{equation}
    P(p|do(x)) = P(p) \quad \text{when}\  (P \perp\!\!\!\perp X)_{G_{\overline{X},\overline{P}}},
\end{equation}
To reduce the former term $P(y|p, do(x)) $, we apply Rule 2. Here we set $Z=X $, $W=P $:
\begin{equation}
    P(y|p, do(x)) = P(y|p, x)\quad\text{when} \ (X \perp\!\!\!\perp Y|P)_{G_{\underline{X},}}
\end{equation}
Note that in $G_{\underline{X}} $, $P$ d-separates $X $ from $Y $. This allows us to rewrite Equation \eqref{do} as follows:
\begin{equation}
    P(y|do(x)) = \sum_{p} P(y|p, x) P(p)  
\end{equation}
The right-hand side of the equation contains only functions of the observable distribution $P(y,x,p) $; therefore, $P(Y|do(X))$ is identifiable.

\subsection{Experimental Details for Causal Discovery}
\label{sec:expri_cd}
To examine whether the causal relationship between the degradation pattern $T$ and the semantic feature $C$ is consistent with the constructed SCM, we perform causal discovery on the real-world dataset corresponding to Fig.\ref{fig_sc}. The overall validation algorithm consists of two procedures.

\begin{figure} 
    \vspace{-0.3cm}
    \centering
    \includegraphics[width=0.45\linewidth]{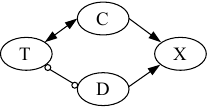}
    \vspace{-0.3cm}
    \caption{The output results of the FCI algorithm. Under the semantics of FCI output, a directed arrow $A\rightarrow B$ indicates that $A$ is a cause of $B$, a bidirected arrow $A \leftrightarrow B$ indicates that there is a latent confounder of $A$ and $B$, and a circle endpoint $A\ \text{o–o}\ B$ indicates that no set separates $A$ and $B$.} 
    \vspace{-0.3cm}
\label{FCI}
\end{figure}

\begin{table}
\centering
\caption{The accuracy of classifiers and the Fréchet-type distribution distance of generative models under different environments $e$.}
\label{ICP}

\begin{tabular}{c|cc|cc}%
\toprule 
\multirow{2}{*}{$e$} & \multicolumn{2}{c|}{Deraining} &  \multicolumn{2}{c}{Dehazing} \\
 & Acc(\%, $\uparrow$)& FID($\downarrow$) & Acc(\%, $\uparrow$)& FID($\downarrow$) \\
\midrule
1 & 43.25& 147.62& 52.38& 133.54\\
2& 27.84& 172.57& 30.86& 154.28\\
3& 29.59& 156.78& 38.74& 148.97\\
\bottomrule
\end{tabular}
\vspace{-0.3cm}
\end{table}

First, we apply the FCI algorithm, which outputs a PAG representing the Markov equivalence class of causal graphs consistent with the observed conditional independencies. Concretely, we run FCI on the variables $(T, C, D, X)$, using the Kernel Conditional Independence (KCI) test \cite{zhang2012kernel} as the conditional independence test to accommodate high-dimensional and nonlinear dependencies. The result of the FCI run, shown in Fig.\ref{FCI}, reveals a bidirectional edge between the degradation pattern $T$ and the semantic feature $C$ ($T \leftrightarrow C$), providing preliminary evidence for the existence of a spurious correlation induced by a latent confounder. 

Second, to further demonstrate that the latent confounder behind $C \leftrightarrow T$ is indeed the environment $E$, and to exclude any direct causal relationship between $C$ and $T$, we utilize the ICP principle \cite{peters2016causal}. ICP leverages causal mechanisms that remain invariant across different environments to distinguish true causal relations from spurious correlations caused by confounder. Specifically, we slightly repartition the dataset to obtain different environments(e.g., environment $e_1$ contains more rainy samples), and within each environment we train a degradation classifier and a conditional generative model to model $\hat{p}_e(T\mid C)$ and $\hat{p}_e(C\mid T)$ separately.
The experimental results in Table \ref{ICP} show a significant difference in $\hat{p}_e(T\mid C)$ and $\hat{p}_e(C\mid T)$ across different environments (measured by accuracy and FID, respectively), thereby confirming the absence of a direct $C \to T$ or $T \to C$ causal relationship. Combined with the output of FCI, the most plausible explanation for this strong association between $T$ and $C$ is that they are both influenced by the same environmental factor, i.e., $T \leftarrow E \to C$.

\end{document}